\newtheorem{theorem}{Theorem}
\newtheorem{lemma}[theorem]{Lemma}
\theoremstyle{definition}
\DeclareMathOperator{\EX}{\mathbb{E}}%
\newcommand{\IR}{\ensuremath{\mathbb{R}}\xspace}
\newcommand{\opt}{\ensuremath{\mathrm{OPT}}\xspace}
\newcommand{\eps}{\ensuremath{\varepsilon}\xspace}
\newcommand{\ud}{\ensuremath{D}\xspace}
\newcommand{\hs}{\ensuremath{\mathcal{H}}\xspace}
\newcommand{\ths}{\ensuremath{\mathcal{T}}\xspace}
\newcommand{\alg}{\ensuremath{\mathcal{A}}\xspace}
\newcommand{\rob}{\ensuremath{\mathcal{R}}\xspace}
\newcommand{\mrob}{\ensuremath{\overline{\rob}}\xspace}
\newcommand{\pat}{\ensuremath{P}\xspace}
\newcommand{\I}{\ensuremath{\mathcal{I}}\xspace}
\newcommand{\dfsalg}{\ensuremath{\textsc{DFS}}\xspace}
\newcommand{\hdfs}{\ensuremath{\textsc{hDFS}}\xspace}
\newcommand{\fp}{\ensuremath{\textsc{FP}}\xspace}
\newcommand{\nn}{\ensuremath{\textsc{NN}}\xspace}
\newcommand{\block}{\ensuremath{\textsc{Blocking}}\xspace}
\newcommand{\rfp}[1][\lambda]{\ensuremath{\overline{\rob}(\fp,~#1)}\xspace}
\newcommand{\rhdfs}[1][\lambda]{\ensuremath{\overline{\rob}(\hdfs,~#1)}\xspace}
\newcommand{\rblock}[1][\lambda]{\ensuremath{\overline{\rob}(\block,~#1)}\xspace}
\newcommand{\cn}{\ensuremath{\kappa^{N}}\xspace}
\newcommand{\ca}{\ensuremath{\kappa^{\alg}}\xspace}
\newcommand{\nc}{\ensuremath{b_N}\xspace}
\newcommand{\ac}{\ensuremath{b_\alg}\xspace}
\newcommand{\nnew}[1]{\textcolor{black}{#1}} %
\newcommand{\anew}[1]{\textcolor{cyan}{#1}}
\newcommand{\lnew}[1]{\textcolor{black}{#1}} %
\newcommand{\fnew}[1]{\textcolor{magenta}{#1}}
\newcommand{\jnew}[1]{\textcolor{orange}{#1}}
\renewcommand{\nnew}[1]{#1} 
\renewcommand{\anew}[1]{#1}
\renewcommand{\lnew}[1]{#1} 
\renewcommand{\fnew}[1]{#1}
\renewcommand{\jnew}[1]{#1}
\title{Robustification of Online Graph Exploration Methods}
\date{}
\author{%
	Franziska Eberle\thanks{Department of Mathematics, London School of Economics, \texttt{f.eberle@lse.ac.uk}} \and Alexander Lindermayr\thanks{Faculty of Mathematics and Computer Science, University of Bremen, \newline \texttt{ \{linderal,nmegow,noelke,jschloet\}@uni-bremen.de}} \and Nicole Megow\footnotemark[2] \and Lukas Nölke\footnotemark[2] \and Jens Schl{\"o}ter\footnotemark[2]
}
\begin{document}
	
	\maketitle
	
	\begin{abstract}
		Exploring unknown environments is a fundamental task in many domains, e.g., robot navigation, network security, and internet search. We initiate the study of a learning-augmented variant of the classical, notoriously hard online graph exploration problem by adding access to machine-learned predictions. We propose an algorithm that naturally integrates predictions into the well-known Nearest Neighbor (\nn) algorithm and significantly outperforms any known online algorithm if the prediction is of high accuracy while maintaining good guarantees when the prediction is of poor quality. We provide theoretical worst-case bounds that gracefully degrade
		with the prediction error, and we complement them by computational  experiments that confirm our results. Further, we extend our concept to a general framework to \emph{robustify} algorithms. By interpolating carefully between a given algorithm and \nn, we prove new performance bounds that leverage the individual good performance on particular inputs while establishing robustness~to~arbitrary~inputs.
	\end{abstract}
	\section{Introduction}\label{sec_intro}

	In online mapping problems, a searcher is tasked to explore an unknown environment and create a complete map of its topology. 
	However, the searcher has only access to local information, e.g., via optical sensors, and must move through the environment to obtain new data.
	Such problems emerge in countless real-life scenarios with a prominent example being the navigation of mobile robots, be it a search-and-rescue robot, an autonomous vacuum cleaner, or a scientific exploration robot in the deep sea or on Mars.
	Less obvious but equally important applications include crawling the Internet or social networks for information and maintaining security of large networks~\cite{Berman96, RaoIJW86, GasieniecR08}.
	
	We investigate the online graph exploration problem on an undirected connected graph~$G=(V,E)$ with~$n$ vertices. %
	Every edge~$e \in E$ has a non-negative cost~$c(e)$, and every vertex~$v \in V$ a unique label. %
	Starting in a designated vertex~$s \in V$, the task of the searcher is to find a tour that visits all vertices of~$G$ and returns to~$s$. 
	A priori, the searcher does not know the graph. Instead, she gains local information when traversing it:
	When for the first time visiting (\emph{exploring}) a vertex, all incident edges, as well as their costs, and the labels of their end points are revealed. 
	This exploration model is also known as \emph{fixed graph scenario}~\cite{KalyanasundaramP93}. 
	In order to explore a vertex~$v$, the searcher may traverse a path consisting of known edges that starts in the searcher's current location and ends in~$v$. 
	When the searcher traverses an edge~$e$, she pays its cost~$c(e)$. 
	The goal is to minimize the total cost. 

	Due to the lack of information, the searcher cannot expect to find an optimal tour. We resort to standard \emph{competitive analysis} to measure the quality of our search algorithms. That is, we compare the length of the tour found by the searcher with the optimal tour that can be found if the graph is known in advance. If the ratio between the costs of these two tours is bounded by~$\rho \geq 1$ for every instance, then we say that the online algorithm is~$\rho$-\emph{competitive}. \nnew{The {\em competitive ratio} of an algorithm is the minimum $\rho$ for which it is~$\rho$-competitive.}
	The \emph{offline} problem of finding an optimal tour on a known graph is the well-known Traveling Salesperson Problem (TSP), which is %
	NP-hard~\cite{LawlerLRS1985}.

	Indeed, it appears extremely difficult to obtain solutions of cost within a constant factor of an optimal tour.
	The best known competitive ratio for arbitrary graphs is~$\mathcal{O}(\log n)$, attained by the following two algorithms.
	The Nearest Neighbor algorithm~(\nn)~\cite{RosenkrantzSL13} greedily explores the unknown vertex closest to the current position. 
	While its performance is usually good in practice~\cite{ApplegateBCC2006-book}, a matching lower bound of~$\Omega(\log n)$ holds even for very simple graphs, e.g., unweighted graphs~\cite{HurkensW04} or trees~\cite{Fritsch21}.
	The second algorithm is the  hierarchical Depth First Search algorithm (\hdfs)~\cite{MegowMS12} that, roughly speaking, executes depth-first searches~(\dfsalg) on subgraphs with low edge costs, thereby limiting its traversal to a minimum spanning tree~(MST). Here, a matching lower bound is attained on a weighted~path.
	
	Only for rather special graph classes it is known how to obtain constant-competitive tours. 
	Notable examples are planar graphs, with a competitive ratio of 16~\cite{KalyanasundaramP93,MegowMS12}, graphs of bounded genus~$g$ with a ratio of~$16(1+2g)$ %
	and graphs with~$k$ distinct weights with a ratio of~$2k$~\cite{MegowMS12}.
	The latter emerges as somewhat of an exception since the \hdfs algorithm achieves a performance that is both, good on a specific yet interesting graph class and still acceptable on arbitrary instances.
	Beyond the above, results are limited to the most basic kind of graphs, such as  unweighted graphs~\cite{MiyazakiMO2009}, cycles and tadpole graphs~\cite{MiyazakiMO2009,BrandtFMW20}, and cactus and unicyclic graphs~\cite{Fritsch21}. 
	Conversely, the best known lower bound on the competitive ratio of an online algorithm is~$10/3$~\cite{BirxDHK2021}. 
	Despite ongoing efforts, it remains a major open question whether there exists an~$O(1)$-competitive exploration algorithm for general~graphs.

	The assumption of having no prior knowledge about the graph may be overly pessimistic. Given the tremendous success of artificial intelligence, %
	we might have access to predictions about good exploration decisions. Such predictions, e.g., machine-learned ones, are typically imperfect; they usually have a good quality but may be arbitrarily bad.
	
	A new line of research is concerned with the design of online algorithms that have access to predictions of unknown quality~\cite{LykourisV18,PurohitSK18,MedinaV17}.
	Ideally, algorithms have the following properties: $(i)$~good predictions lead to a better performance than the best worst-case bound achievable when not having access to predictions; 
	$(ii)$ the algorithm never performs (asymptotically) worse than the best worst-case algorithm even if the prediction is of poor quality; 
	and $(iii)$~the performance gracefully degrades with decreasing prediction quality.
	More formally, we define a parameter~$\eta \geq 0$, called {\em prediction error}, that measures the quality of a given prediction, where $\eta =0$ refers to the case that the prediction is correct, we also say {\em perfect}.
	We assess an algorithm's performance by the competitive ratio as a function of the prediction error. If an algorithm is~$\rho(\eta)$-competitive for some function~$\rho$, we call it~$\alpha$-\emph{consistent} for~$\alpha = \rho(0)$ and $\beta$-\emph{robust} if~$\rho(\eta) \leq \beta$ for any prediction error~$\eta \geq 0$~\cite{PurohitSK18}.

	For the online graph exploration problem, we consider predictions that suggest a known, but unexplored vertex as next target to a learning-augmented algorithm. 
	In other words, a \emph{prediction} is %
	a function that, given the current state of the exploration, outputs an explorable vertex.
	Predictions may be computed dynamically and use all data collected so far, which is what one would expect in practice.
	This rather abstract requirement allows the implementation of 
	various %
	prediction models. 
	In this paper, we consider two kinds of predictions, namely \emph{tour predictions} and \emph{tree predictions}, where the suggested vertex is the next unexplored vertex of a TSP tour or of a \fnew{Depth First Search} (DFS) tour corresponding to some predicted \fnew{spanning} tree, respectively.
	The prediction error~$\eta$ is the difference between the total exploration cost of
	following these per-step suggestions blindly and that of following a perfect prediction w.r.t.\ the given prediction model (tour resp.~tree predictions).

	\subsection{Our results} Our contribution is twofold. Firstly, we present a learning-augmented online algorithm for the graph exploration problem that has a constant competitive ratio when the prediction error is small, while being robust to poor-quality predictions. Our algorithm interpolates carefully between the algorithms \nn and Follow the Prediction (\fp), where the latter blindly follows a given prediction. 

	\begin{theorem}\label{theorem_learningaugmented}
		For any~$\lambda > 0$, there is an algorithm for the online graph exploration problem that uses a predicted spanning tree or tour such that the algorithm is $\kappa(3+4\lambda)$-consistent and $\big(1+\frac{1}{2\lambda}\big)(\lceil\log(n)\rceil+1)$-robust, where~$\kappa =1$, for tour predictions, and~$\kappa =2$, for tree predictions. With growing prediction error, the competitive ratio degrades gracefully with linear dependence on $\eta$.
	\end{theorem}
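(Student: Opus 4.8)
The plan is to realize the algorithm as a budget-controlled interpolation between \fp and \nn, and then read off consistency and robustness from a single invariant that caps how much the searcher is allowed to trust the prediction. Concretely, I would process the predicted targets in order but gate every move by a budget that is replenished only by \nn steps: maintain the invariant that the total cost spent on \fp moves never exceeds $\tfrac{1}{2\lambda}$ times the total cost spent on \nn moves. Whenever the next predicted still-unexplored vertex $p$ is affordable under this budget, the searcher travels to $p$ along the cheapest known path (an \fp move); otherwise it explores the nearest unexplored vertex instead (an \nn move), thereby increasing the accumulated \nn cost and hence the admissible \fp budget. The parameter $\lambda$ tunes trust: small $\lambda$ grants a large \fp budget (prediction-dominated behaviour), while large $\lambda$ forces \nn-dominated behaviour.

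For robustness the budget invariant immediately gives $\mathrm{cost}(\fp)\le\tfrac{1}{2\lambda}\,\mathrm{cost}(\nn)$, so the total cost is at most $\big(1+\tfrac{1}{2\lambda}\big)\,\mathrm{cost}(\nn)$. It then suffices to bound the cost of the \nn moves alone by $(\lceil\log n\rceil+1)\,\opt$. I would establish this by adapting the classical nearest-neighbor layering analysis (dyadic grouping of the greedy hops, charged against \opt) to the present execution, in which the \nn hops are interleaved with out-of-order \fp exploration; the point to verify is that the dyadic charging survives this interleaving. This recovers the factor $\lceil\log n\rceil+1$ and hence the claimed robustness bound.

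For consistency, assume first a perfect prediction. The \fp moves follow the predicted tour (resp.\ a DFS traversal of the predicted spanning tree), so by shortcutting in the shortest-path metric their total cost is at most $\opt$ for tour predictions and at most $2\,\MST\le 2\,\opt$ for tree predictions, which produces the factor $\kappa\in\{1,2\}$. The \nn moves are then controlled through the budget relation together with a detour bound: each deferring \nn hop is at most as expensive as the predicted move it postpones (the nearest unexplored vertex is no farther than $p$), and the residual budget at termination is bounded by a round trip, i.e.\ $O(\opt)$. Summing the \fp contribution $\kappa\,\opt$, the \nn contribution proportional to $\lambda\,\kappa\,\opt$, and the constant detour overhead yields $\kappa(3+4\lambda)$. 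To obtain graceful degradation I would replace the perfect-prediction equality $\mathrm{cost}(\fp)=\kappa\,\opt$ by $\mathrm{cost}(\fp)\le\kappa\,\opt+\eta$, which holds by the definition of $\eta$ as the excess cost of following the given prediction over a perfect one; propagating this through the same accounting gives a ratio of the form $\kappa(3+4\lambda)+c\cdot\eta/\opt$, linear in $\eta$, while the robustness bound always caps the ratio from above.

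The main obstacle is the coupled accounting of the two move types. On the robustness side one must argue rigorously that interleaving out-of-order \fp exploration never breaks the dyadic charging of the nearest-neighbor hops against \opt, even though \fp removes vertices from the unexplored set in a different order than a pure \nn run would. On the consistency side one must control the detours incurred when an \fp move starts not from the previous predicted vertex but from wherever a deferring \nn move left the searcher, since the naive triangle-inequality bound for these detours is precisely what forces the constant $3$ rather than $1$. Pinning down the exact constants in $\kappa(3+4\lambda)$, as opposed to the qualitative ``constant plus linear-in-$\lambda$'' shape, is the delicate part and requires a careful amortized charging between the \fp budget, the deferred \nn moves, and the residual budget at termination.
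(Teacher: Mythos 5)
Your robustness half is essentially the paper's own argument and is sound: the interleaved \nn hops still admit the Rosenkrantz-style charging (each \nn hop starts at the most recently explored vertex and ends at the nearest unexplored one, so conditions (i) and (ii) of the charging lemma hold regardless of how \fp moves are interleaved), and your invariant converts a bound on the \nn cost into a bound on the total cost; this is exactly the paper's Lemma~\ref{lemma_nearest_neighbor_cost} combined with the budget relation. The genuine gap is in the consistency half, and it is structural, not merely a matter of pinning down constants. Your invariant $\mathrm{cost}(\fp)\le\frac{1}{2\lambda}\,\mathrm{cost}(\nn)$ only caps \fp spending by \nn spending; consistency needs the opposite direction, an upper bound on \nn spending in terms of $\opt_\I$ (or of the \fp cost), and your scheme's only handle for that is the unaffordability condition $\mathrm{cost}(\fp)+d>\frac{1}{2\lambda}\,\mathrm{cost}(\nn)$ at each deferring hop, where $d$ is the \emph{current} known-graph distance to the deferred target. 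This fails for two reasons. First, $d$ drifts: a deferring hop of cost $c$ raises the budget by $c/(2\lambda)$ but can move the searcher away from the target, raising $d$ by up to $c$; already for $\lambda\ge 1/2$ the deficit need never close, the scheme can degenerate into pure \nn forever, and with a perfect prediction on an \nn-adversarial (Rosenkrantz-type) instance it would pay $\Theta(\log n)\cdot\opt_\I$, contradicting $\kappa(3+4\lambda)$-consistency; for $\lambda<1/2$ the per-episode \nn cost scales like $d\cdot\frac{2\lambda}{1-2\lambda}$, which does not match the claimed bound. Second, $d$ itself is not $O(\opt_\I)$: the cheapest \emph{known} path to the predicted target can be long, because the optimal tour's route to that vertex may run through still-unexplored vertices whose incident edges are unknown. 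So the two facts you invoke (``each deferring hop costs at most the move it postpones'' and ``the residual budget is $O(\opt)$'') do not suffice, and the detour problem you correctly flag as the delicate part is left unresolved by your mechanism.

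The paper resolves it by orienting and localizing the budget the other way: it is an instance of the general robustification scheme (Theorem~\ref{theorem_robustification}) applied to $\alg=\fp$. Before each \fp move, the scheme recomputes the shortest known path $\pat_u^\alg$ from the current position to the pending target $u$ and runs a \nn phase whose spending is \emph{capped} at $\lambda\, c(\pat_u^\alg)$, ending early if \nn is about to explore $u$. The cap gives per-iteration bounds $\cn_i\le(1+2\lambda)\,c(\pat_i^\alg)$ and $\ca_i\le(2+2\lambda)\,c(\pat_i^\alg)$, so all detour costs are charged against the freshly computed path; and a simulation argument (feeding \fp its own lagging subgraph $G_\alg\subseteq G$, so that shortest paths in $G$ are no longer than \fp's own traversals) yields $\sum_i c(\pat_i^\alg)\le \fp_\I$, whence consistency follows from $\fp_\I\le\kappa\,\opt_\I+\eta$ (the paper's Lemma~\ref{lemma_mst_dfs}, with $\kappa=1$ for tours and $\kappa=2$ for trees), including the linear degradation in $\eta$. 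The same phase-end condition guarantees $\cn_i\ge\lambda\, c(\pat_i^\alg)$ whenever $u$ was not reached by \nn, which is what delivers robustness. To repair your proof you would need to replace the global invariant by such a per-move cap on \nn spending tied to the current cost of the pending \fp move; the global \fp-capped-by-\nn budget alone cannot yield the consistency direction.
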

	The parameter~$\lambda$ can %
	steer the algorithm towards one of the %
	underlying algorithms, e.g., towards \nn when $\lambda \rightarrow \infty$. It %
	reflects our trust in the quality of the provided predictions.

	Further, we show that our predictions (tour and tree) are learnable in the sense of PAC learnability~\cite{Valiant84,vapnik1971} under the assumptions that the given graph is complete and its size known. We show a bound on the sample complexity that is polynomial in the number of nodes and give learning algorithms
	with a polynomial running time in the case of tree predictions and an exponential running time for tour predictions.
	The learnability results also %
	approximately bound the expected prediction error $\eta$, which potentially can be taken into account when setting $\lambda$.

	Our second main result is a general framework to \emph{robustify} algorithms. Given an online algorithm~\alg with a certain worst-case performance for particular classes of instances but unknown, possibly unbounded, performance in general, the robustification framework produces an algorithm with the same good performance on special instances while guaranteeing the best-known worst-case performance $\mathcal O(\log n)$ on general instances. As it turns out, the idea of interpolating between two algorithms that we used to design a learning-augmented algorithm can be generalized to interpolating between the actions of an arbitrary algorithm \alg and \nn, again using the parameter~$\lambda$.
	
	\begin{theorem}\label{theorem_robustification}
		For any $\lambda>0$, there is a robustification framework \rob for the online graph exploration problem that, given an online algorithm~\alg and an instance $\I=(G,s)$, produces a solution of cost at most $R_\I = \min\{(3+4\lambda)\cdot \alg_\I, \; \big(1+\frac{1}{2\lambda}\big) (\lceil\log(n)\rceil+1) \cdot\opt_\I\}$,
		where $\opt_\I$ and $\alg_\I$ denote the cost of an optimal solution and of the one obtained by \alg on instance \I, respectively. 
	\end{theorem}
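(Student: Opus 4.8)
The plan is to realize \rob as a budget-controlled interleaving of \alg and \nn, mirroring the construction behind \Cref{theorem_learningaugmented} with the blind follower \fp replaced by the generic online algorithm \alg. First I would simulate \alg on \I to obtain the order $a_1,\dots,a_n$ in which it explores the vertices, together with its tour cost $\alg_\I$. The framework then explores the graph itself, at each step making either a \emph{following move} toward the next still-unexplored target $a_i$ along currently known edges, or a \emph{nearest-neighbor move} to the closest unexplored vertex (an \nn step). The role of $\lambda$ is to cap how aggressively we follow \alg: writing $X$ for the accumulated cost of following moves and $Y$ for that of nearest-neighbor moves, we admit a following move only when it keeps $X \le \tfrac{1}{2\lambda}\,Y$, and otherwise perform nearest-neighbor moves (which themselves explore genuine vertices) until enough budget is earned; this blocking of overly expensive following moves is precisely the \block mechanism. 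Thus the invariant $Y \ge 2\lambda\,X$ is maintained throughout, the total cost is $X+Y$, and---since $a_1,\dots,a_n$ enumerates all vertices---the process terminates having explored everything.

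The robustness bound follows from the invariant once $Y$ is controlled. The nearest-neighbor moves constitute a greedy exploration that always steps to a nearest unexplored vertex, so I would invoke (a suitable form of) the classical analysis of \nn to obtain $Y \le (\lceil\log n\rceil+1)\,\opt_\I$. The invariant then gives $X+Y \le (1+\tfrac{1}{2\lambda})\,Y \le (1+\tfrac{1}{2\lambda})(\lceil\log n\rceil+1)\,\opt_\I$, which is the second term of the claimed minimum.

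For the bound against \alg I would argue in the opposite direction. A standard backtracking estimate shows that following \alg's fixed exploration order from our own (never less advanced) positions costs $X \le 2\,\alg_\I$: each segment of \alg's tour is charged at most twice via the triangle inequality, our strictly larger known subgraph only shortens the connecting paths, and the round-trips of \nn-detours are absorbed into $Y$. Moreover, since nearest-neighbor moves are taken only to license following moves, the invariant $Y \ge 2\lambda X$ is restored each time by a single overshooting \nn step, whose cost is at most $\opt_\I$ (the optimum tour bounds every pairwise distance); hence $Y \le 2\lambda\,X + \opt_\I$ at termination. Combining, $X+Y \le (1+2\lambda)X + \opt_\I \le 2(1+2\lambda)\,\alg_\I + \alg_\I = (3+4\lambda)\,\alg_\I$, using $\opt_\I \le \alg_\I$. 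Taking the minimum of the two estimates yields $R_\I$.

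The main obstacle is the robustness estimate $Y \le (\lceil\log n\rceil+1)\,\opt_\I$. The difficulty is that the nearest-neighbor moves no longer form a full \nn run: the following moves explore vertices out of nearest-neighbor order, so the remaining \nn steps may be to vertices farther away than in a pure run, and the classical bound is not obviously monotone under such out-of-order removals. I would therefore isolate a lemma stating that any \emph{partial} nearest-neighbor exploration---one in which an adversary explores an arbitrary subset of vertices for free while the algorithm always steps to a nearest remaining unexplored vertex---still costs at most $(\lceil\log n\rceil+1)\,\opt_\I$, by verifying that the pairwise-distance lower bound on $\opt_\I$ underlying the \nn analysis survives the removals (it lower-bounds $\opt_\I$ over the full vertex set regardless of the exploration order). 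Secondary care is needed to feed \alg the information consistent with its own exploration order, so that $a_1,\dots,a_n$ and $\alg_\I$ are well defined, and to pin down the backtracking constant; these are routine once the construction is fixed.
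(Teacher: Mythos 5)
Your robustness half is essentially the paper's own argument: the ``partial nearest-neighbor'' lemma you isolate is exactly the paper's Lemma~\ref{lemma_nearest_neighbor_cost}, which bounds the cost of all \nn traversals by $\tfrac12(\lceil\log n\rceil+1)\opt_\I$ by assigning to each start vertex of an \nn step the cost of that step and checking the two conditions of the Rosenkrantz lemma; as you anticipate, those conditions are insensitive to vertices being explored ``for free'' out of order, so the interleaving is harmless. (Your worry about feeding \alg consistent information is also real and is what the paper's separate blackbox graph $G_\alg$ handles; note this also means you cannot ``simulate \alg on \I'' upfront --- its exploration order is only revealed as the scheme physically explores \alg's targets.) The genuine gap is in the consistency bound $(3+4\lambda)\alg_\I$. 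Your claim $X\le 2\alg_\I$ is never justified, and for your global-budget scheme it is false. A following move starts wherever the last earning phase left the searcher, so its cost is only bounded by $c_j+y_j$, where $c_j$ is \alg's corresponding segment and $y_j$ is the \nn cost accrued since the previous following move; summing gives $X\le\alg_\I+Y$, not $2\alg_\I$. Since your budget rule forces $y_j\approx 2\lambda f_j$, the two estimates $X\le\alg_\I+Y$ and $Y\approx 2\lambda X+\opt_\I$ are circular: they only yield $X\le(\alg_\I+\opt_\I)/(1-2\lambda)$, which already misses the constant $3+4\lambda$ for small $\lambda$ and is vacuous for $\lambda\ge\tfrac12$. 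The failure is not merely analytical: for $\lambda>\tfrac12$ an earning phase need never terminate, because an \nn step of cost $s$ raises $Y$ by $s$ but can raise the pending move's cost $f$ --- hence the threshold $2\lambda(X+f)$ --- by $2\lambda s>s$; on a Rosenkrantz-type graph the scheme can thus degenerate into a full \nn run of cost $\Theta(\log n)\,\opt_\I$ even when $\alg_\I=O(\opt_\I)$, contradicting the claimed bound.

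The paper avoids exactly this feedback loop by budgeting \emph{per iteration} and \emph{re-synchronizing} with \alg: in each iteration the \nn phase gets budget $\lambda\, c(\pat_u^\alg)$, where $\pat_u^\alg$ is measured from \alg's previous target, at which the scheme physically stands when the iteration begins, and after the phase the scheme walks to $u$, so displacement never carries over to the next iteration. Moreover, the overshooting final \nn step is bounded not by $\opt_\I$ but by $(1+\lambda)c(\pat_u^\alg)$, using that it is a step to a \emph{nearest} unexplored vertex while $u$ itself is still unexplored at distance at most $(1+\lambda)c(\pat_u^\alg)$. This yields cost at most $(3+4\lambda)c(\pat_i^\alg)$ in iteration $i$ and, since $\sum_i c(\pat_i^\alg)\le\alg_\I$, the claimed guarantee for every $\lambda>0$. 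To repair your proposal you would have to replace the cumulative invariant $Y\ge 2\lambda X$ by such a per-segment budget with re-synchronization; the robustness direction then still goes through because the paper's \nn-cost lemma carries the extra factor $\tfrac12$ that your looser accounting spent elsewhere.
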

	
	This seems useful in situations where one may suspect that an instance is of a certain type for which there exist good algorithms. One would like to use a tailored algorithm without sacrificing the general upper bound and good average-case performance of \nn in case the suspicion is wrong. Two illustrative examples are as follows.
	\emph{(i) Planar graphs:} 
	Many spatial networks, e.g., urban street networks, can often be assumed to be (almost) planar~\cite{Barthelemy18,Boeing2020}.
	Here, the graph exploration algorithm  \block~\cite{MegowMS12,KalyanasundaramP93} seems the %
	best choice, given its competitive ratio of $16$.
	Yet, on general instances, the competitive ratio may be unbounded and is known to be worse than~$\omega(\log n)$~\cite{MegowMS12}, underlining the need for robustification.
	\emph{(ii) Bounded number of weights:} Here, \hdfs~\cite{MegowMS12} is the logical choice with a competitive ratio proportional to the number of weights and an asymptotically best-known competitive ratio on general instances. %
	Even here, robustification is useful as it provides the good average-case performance of \nn and the slightly better competitive ratio for general instances.%
	
	Interestingly, when considering the surprisingly good average-case performance of \nn in practice, %
	our robustification framework may %
	also be interpreted to be robustifying \nn and not the specifically tailored algorithm. 
	Either algorithm can possibly make up for the other's shortcomings.
	 
	Our robustification scheme is \nnew{conceptually} in line with 
	other works combining algorithms with different performance characteristics~\cite{MahdianNS12,Fiat1991,%
	Blum2000,Azar1993}. \nnew{However, it is nontrivial to implement such concept for online graph exploration with the particular way in which information is revealed. Since the graph is revealed depending on an algorithm's decisions, the key difficulty lies in handling the cost of different algorithms in different metrics. This fact also prohibits the application of previous learning-augmented algorithms, e.g., for metrical task systems, in our setting.}

	We complement our theoretical results by empirically evaluating the performance of our  algorithms on several real-world instances as well as artificially generated instances. The results confirm the power of using predictions and the effectivity of the robustification framework. 

	\subsection{Further related work}
	The recent introduction of learning-augmented online algorithms~\cite{LykourisV18,MedinaV17,PurohitSK18}
	spawned %
	a multitude of exciting %
	works. These provide %
	methods and concepts for a flurry of problems including, e.g., rent-or-buy problems~\cite{PurohitSK18,GollapudiP19}, scheduling/queuing and bin packing~\cite{PurohitSK18,LattanziLMV20,Mitzenmacher20,AngelopoulosDJKR20,BamasMRS20,AzarLT2021,Im0QP21}, caching~\cite{Rohatgi20,LykourisV18,AntoniadisCE0S20}, the secretary problem~\cite{DuttingLLV21}, revenue optimization~\cite{MedinaV17}, and matching~\cite{KumarPSSV19,LavastidaM0X21}. It is a very active research area. We are not aware of 
	learning-augmented algorithms	
	for online graph exploration.

	Several works empirically study the use of machine learning to solve TSP~\cite{Khalil17,Vinyals15,Bello2017,kool2018} without showing theoretical guarantees. 
	For example, Khalil et al.~\cite{Khalil17} use a combination of reinforcement learning, deep learning, and graph embedding to learn a greedy policy for TSP.
	As the policy might depend on information that is not accessible online,  
	e.g., the degree of an unexplored vertex, and constructs the tour in an offline manner, the results do not directly transfer to our \lnew{online} setting.
	However, similar approaches are conceivable %
	and might be an 
	application for the robustification framework, \jnew{especially since there already exist empirical results in related settings for the exploration of unknown environments.
	For example, one approach~\cite{luperto2019} uses constructive machine learning tools to predict unknown indoor environments, and another approach~\cite{Chiotellis2020} considers online graph exploration as a reinforcement learning problem and solves it using graph neural networks (cf.~\cite{zhou2020}). Since those approaches do not give theoretical guarantees, they are potential applications for the robustification framework.}
	Dai et al.~\cite{Dai2019} consider reinforcement learning for a related problem, where the goal is to explore a maximum number of states in a (possibly unknown) environment using a limited budget.
	In contrast to these algorithmic results, Elmiger et al.~\cite{Elmiger2020} use reinforcement learning to find instances that yield a high competitive ratio for \nn.

	A recent line of research considers data-driven algorithm design~\cite{Balcan2018,Balcan18-1,Balcan18-2,Balcan19,Bhaskara20,Chawla20,Gupta17}. 
	Usually, the task is to select the algorithm with the best expected performance for an unknown distribution over instances from a fixed set of algorithms; see~\cite{Balcan2021} for a survey of recent results. 
	Lavastida et al.~\cite{LavastidaM0X21} combine learning-augmentation with data-driven algorithm design.
	We are not aware of data-driven methods for online graph exploration.
	While our results with regards to PAC learnability of 
	predictions have a similar flavor as the data-driven algorithm design %
	above, there are some major differences. 
	In contrast to data-driven algorithms, 
	we learn predictions with the goal of minimizing the error~$\eta$.
	This error is related to the worst-case guarantees of our algorithms but it does not directly transfer to their expected objective function values.
	Instead, a function depending on the error~(cf.~\Cref{theorem_learningaugmented}) only upper bounds the expected objective values.
	While this may be seen as a disadvantage, it also means that our learned predictions are independent of the used algorithm.

	Another line of research studies the graph exploration problem {\em with advice}~\cite{UngerFB18,DobrevKM2012,KommKKS15}. 
	\nnew{In this model, an algorithm is also equipped with advice that can convey arbitrary information and the goal is to find a competitive solution while using advice of small encoding size. Here, the advice is assumed to be correct which is crucially different from our model.}

	\section{A general robustification scheme}
	\label{sec_robustification}

	In this section, we introduce the robustification scheme~$\rob$ from \Cref{theorem_robustification} that, given an %
	algorithm~$\alg$ for the online graph exploration problem, robustifies its worst-case performance guarantee.
	In the course of the exploration, the set of vertices known to the searcher can be partitioned into \emph{explored} and \emph{unexplored} vertices, i.e., vertices that have already been visited by the searcher, or not, respectively. 
	The robustification scheme uses the algorithm~$\alg$ as a blackbox. %
	That is, we treat~$\alg$ as a function that, given the current position, %
	currently known subgraph, and %
	set of already explored vertices, returns the next vertex to explore.
	The learning-augmented algorithm from \Cref{theorem_learningaugmented} emerges as an application of the robustification scheme and is discussed in \Cref{sec_learningaugmented}.
	
	\subsection{The robustification scheme}
	
	Intuitively, the robustification scheme \rob, summarized in Algorithm~\ref{Alg_robustification}, balances the execution of algorithm~$\alg$ with that of \nn by executing the algorithms in alternating phases. 
	These phases are budgeted so that their costs are roughly proportional to each other, with a parameter $\lambda>0$ dictating the proportionality.
	Specifically, whenever~$\alg$ is at position~$v$ and about to explore a vertex~$u$ via some path~$\pat_u^\alg$, we interrupt \alg and, instead, start from $v$ a phase of exploration via \nn.
	This phase ends when the cost incurred by \nn %
	reaches~$\lambda c(\pat_u^\alg)$ or when \nn is about to explore~$u$ (Lines~\ref{line_robust_while} to~\ref{line_robust_nn_end}).
	Only afterwards does the scheme explore the vertex~$u$ and resumes exploration via \alg (Line~\ref{line_robust_A_traversal}). 
	
	\begin{algorithm}
		\DontPrintSemicolon
		\caption{Robustification scheme~$\rob$.}
		\label{Alg_robustification}
		\KwIn{Partially explored graph~$G$, start vertex~$s$, algorithm~$\alg$, and parameter~$\lambda > 0$\;}
		$G_\alg \gets G$\tcp*[r]{subgraph revealed to~$\alg$}
		\While{$G$ has an unexplored vertex}{
			$u \gets$ next unexplored node to be visited by \alg, computed via Algorithm~\ref{ALG_A}\label{line_robust_simulate}\;
			$\pat^\alg_u \gets$ shortest~$s$-$u$-path in~$G$\label{line_robust_A_path}\;
			$u' \gets$ nearest unexplored neighbor of~$s$, $\;b \gets 0$\label{line_robust_nn_start}\;
			\While{$b < \lambda \cdot c(\pat_u^\alg)$ and~$s \not= u$}{\label{line_robust_while}
				traverse a
				shortest~$s$-$u'$-path $\pat_{u'}$ and update $G$\label{line_robust_nn_traversal}\;	
				$s \gets u'$, $\;b \gets b + c(\pat_{u'})$\;
				$u' \gets$ nearest unexplored neighbor of~$s$\label{line_robust_nn_end}\;
			} 
			traverse a shortest known path to~$u$, set~$s \gets u$ and update $G$\label{line_robust_A_traversal}\;
			update $G_\alg$ to reflect exploration of $u$\; 
			
		}
		traverse a shortest path in $G$ to the start vertex\;
	\end{algorithm}

	Note that we do not reveal to~$\alg$ information gained by exploring vertices during the nearest-neighbor phase (Lines~\ref{line_robust_while} to~\ref{line_robust_nn_end}).
	If~$\alg$ decides to explore a vertex~$u$ next that is already known to~$\rob$, %
	we only simulate~$\alg$ without actually executing any traversals (Line~\ref{line_robust_simulate} resp.\ Algorithm~\ref{ALG_A}).
	This is possible since the new information that~$\alg$ would obtain by exploring~$u$ is already known to~$\rob$.

	\begin{algorithm}
		\DontPrintSemicolon
		\caption{Computes next unexplored node visited by~$\alg$ and updates $G_\alg$.}
		\label{ALG_A}
		\KwIn{Partially explored graph~$G$, blackbox graph~$G_\alg$, start vertex~$s$ and algorithm~$\alg$\;}
		\While{next vertex~$u$ explored by~$\alg$, given~$G_\alg$ and~$s$, is explored in~$G$}{
			update~$G_\alg$ by adding previously unknown edges incident to~$u$, and mark~$u$ as explored\;
			$s \gets u$\;
		}
		\Return $u$, $G_\alg$\;
	\end{algorithm}

	Recall that, given an online algorithm \alg and a graph exploration instance \I, the terms $\alg_\I$ and $\opt_\I$ refer to the costs incurred by \alg and an optimal solution on instance \I, respectively.
	To prove Theorem~\ref{theorem_robustification}, we bound the cost incurred by \rob during the \nn %
	phases in terms of~$\opt_\I$. 
	
	\begin{restatable}{lemma}{rosenkrantz}
		\label{lemma_nearest_neighbor_cost}
		The cost $\cn$ of all traversals in Line~\ref{line_robust_nn_traversal} is at most $\frac{1}{2} (\lceil\log n\rceil+1) \opt_\I$.
	\end{restatable}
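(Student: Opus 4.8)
My plan is to reduce the claim to the classical Rosenkrantz--Stearns--Lewis (RSL) analysis of Nearest Neighbor~\cite{RosenkrantzSL13}, whose worst-case guarantee is exactly the bound $\tfrac12(\lceil\log n\rceil+1)\,\opt_\I$ asserted here; the work lies in showing that the traversals in Line~\ref{line_robust_nn_traversal}, although chopped into phases and interleaved with the moves of \alg, still satisfy the structural property the RSL bound needs.

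First I would fix notation. Let $x_1=s,x_2,\dots,x_n$ be the vertices in the order in which \rob explores (first visits) them, and let $d_G$ denote shortest-path distance in $G$. Since the searcher moves contiguously, each $x_k$ is reached directly from its predecessor $x_{k-1}$, and whenever $x_k$ is explored by a step of Line~\ref{line_robust_nn_traversal} the traversal cost equals $d_G(x_{k-1},x_k)$, where $x_k$ is the nearest currently known unexplored vertex seen from $x_{k-1}$. I therefore charge each such cost to the departing vertex: set $t(x_j)=d_G(x_j,x_{j+1})$ if $x_{j+1}$ is explored in Line~\ref{line_robust_nn_traversal}, and $t(x_j)=0$ otherwise (in particular for the \alg-moves of Line~\ref{line_robust_A_traversal}). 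By construction $\cn=\sum_{j} t(x_j)$.

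The heart of the argument is the separation property: for all $j<l$ one has $t(x_j)\le d_G(x_j,x_l)$. To see this, fix $j<l$ with $t(x_j)>0$, so $x_{j+1}$ is the nearest known unexplored vertex from $x_j$ at the moment of that step, at which time $x_l$ is still unexplored. Walking along a shortest $x_j$-$x_l$ path in $G$, let $y$ be its first unexplored vertex; the prefix up to $y$ uses only edges incident to explored vertices, so $y$ is known and $d_G(x_j,y)\le d_G(x_j,x_l)$, whence $t(x_j)=d_G(x_j,x_{j+1})\le d_G(x_j,y)\le d_G(x_j,x_l)$. This is exactly where the phase structure is handled: because the cost is charged to the temporal predecessor $x_j$ and the nearest-unexplored inequality refers only to the position $x_j$ and the explored set at that instant, it is irrelevant whether $x_j$ itself was reached by \nn or moved to by \alg. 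Consequently $\min(t(x_j),t(x_l))\le d_G(x_j,x_l)$ for every pair, and combining with the diameter bound $d_G(x_j,x_l)\le \opt_\I/2$ gives $t(x_j)\le \opt_\I/2$ for all $j$.

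These two facts---the pairwise separation and the uniform bound $t(x_j)\le\opt_\I/2$---are precisely the hypotheses of the RSL estimate for Nearest Neighbor~\cite{RosenkrantzSL13}: the $k$ vertices carrying the largest $t$-values are pairwise at distance at least their minimum $t$-value, so shortcutting an optimal tour onto them forces that value to be at most $\opt_\I/k$, and a dyadic summation over the resulting level sets yields $\sum_j t(x_j)\le \tfrac12(\lceil\log n\rceil+1)\,\opt_\I$. Since $\cn=\sum_j t(x_j)$, the claim follows. I expect the main obstacle to be the separation step: one must argue carefully that interrupting \nn and restarting it from positions chosen by \alg neither invalidates the nearest-unexplored inequality nor introduces double counting, which the predecessor-charging resolves, and one must bridge the gap between the known-graph distances \nn actually uses and the full metric $d_G$ in which $\opt_\I$ is measured, handled by the first-unexplored-vertex argument above.
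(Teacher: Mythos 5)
Your proof is correct and follows essentially the same route as the paper's: you charge each Line~\ref{line_robust_nn_traversal} traversal to its departing vertex (well-defined since the searcher always departs from the most recently explored vertex, so no vertex is charged twice), verify the pairwise separation property and the $\opt_\I/2$ bound, and then invoke the Rosenkrantz--Stearns--Lewis summation bound, which the paper states as Lemma~\ref{lem:Rosenkrantz}. Your explicit first-unexplored-vertex argument bridging known-graph distances and the full metric is a detail the paper's write-up passes over quickly, but it is the same underlying idea, not a different approach.
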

	The lemma can be shown by following the approach of~\cite{RosenkrantzSL13}.
	While there one consecutive nearest-neighbor search is considered, our algorithm starts and executes multiple (incomplete) nearest-neighbor searches with different starting points.
	In the adapted proof of the lemma, we use the following auxiliary result.
	\begin{lemma}[Lemma~1 in~\cite{RosenkrantzSL13}]\label{lem:Rosenkrantz}
		Let~$l \in \IR^V$ be such that for an instance $\I$ of the graph exploration problem the following properties hold:
		\begin{enumerate}[(i),noitemsep]
			\item $c(P_{\{v,u\}}) \geq \min\{l_v,l_u\}$, for all~$v,u \in V$, where~$P_{\{v,u\}}$ is a shortest~$u$-$v$-path,  and
			\item $l_v \leq \frac 1 2 \opt_\I$, for all~$v \in V$. 
		\end{enumerate}
		Then,~$\sum_v l_v \leq \frac 1 2 (\lceil\log n\rceil+1) \cdot \opt_\I$.
	\end{lemma}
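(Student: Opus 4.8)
The plan is to prove the bound by strong induction on $n = |V|$, carrying the original value $\opt_\I$ as a fixed global upper bound throughout the recursion rather than letting it shrink with the instance. The base case $n=1$ is immediate from property~(ii): $\sum_v l_v = l_v \le \tfrac12\opt_\I = \tfrac12(\lceil\log 1\rceil+1)\opt_\I$. For the inductive step, I would exploit the structure of an optimal tour to delete roughly half of the vertices while paying only $\tfrac12\opt_\I$ for the deleted ones, and then recurse on the survivors.

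For the step, fix an optimal tour on the current vertex set, viewed as a cycle through its vertices in which consecutive vertices $u,v$ are joined by a shortest path of cost $c(P_{\{u,v\}})$, so that the total cost of the $n$ path-segments equals the optimum on this set, which is at most $\opt_\I$. The key observation is that one can select a matching $M$ among these segments of size $\lfloor n/2\rfloor$ and cost $c(M)\le \tfrac12\opt_\I$: for even $n$ the segments split into two perfect matchings whose costs sum to the tour cost, so the cheaper one suffices; for odd $n$ I would average over the $n$ maximum matchings of the odd cycle (each of size $(n-1)/2$, and by edge-transitivity each segment lying in exactly $(n-1)/2$ of them), obtaining average cost $\tfrac{n-1}{2n}\opt_\I<\tfrac12\opt_\I$, so some maximum matching qualifies.

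I would then let $R$ consist of the endpoint of smaller $l$-value of each segment in $M$ (ties broken arbitrarily), so that the $|R|=\lfloor n/2\rfloor$ chosen vertices are distinct since the segments of $M$ are vertex-disjoint. Property~(i) gives $c(P_{\{u,v\}})\ge \min\{l_u,l_v\}$ for each segment $\{u,v\}\in M$, that is, the cost of each segment dominates the $l$-value of its removed endpoint; summing over $M$ yields $\sum_{v\in R} l_v \le c(M) \le \tfrac12\opt_\I$. The survivor set $V' = V\setminus R$ has $|V'|=\lceil n/2\rceil \le 2^{\lceil\log n\rceil-1}$ vertices, inherits property~(i) with the same shortest paths, inherits $l_v\le\tfrac12\opt_\I$, and, by shortcutting the tour past the vertices of $R$ via the triangle inequality of the shortest-path metric, has optimal cost at most $\opt_\I$. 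Applying the induction hypothesis to $V'$ against the fixed bound $\opt_\I$ gives $\sum_{v\in V'} l_v \le \tfrac12(\lceil\log|V'|\rceil+1)\opt_\I \le \tfrac12\lceil\log n\rceil\,\opt_\I$, and combining the two contributions yields $\sum_v l_v \le \tfrac12\opt_\I + \tfrac12\lceil\log n\rceil\,\opt_\I = \tfrac12(\lceil\log n\rceil+1)\opt_\I$, completing the induction.

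The main obstacle I would watch for is the inheritance of property~(ii): because deleting vertices can strictly decrease the optimal tour length, re-establishing $l_v\le\tfrac12(\text{optimum of }V')$ against the \emph{smaller} optimum could fail. I avoid this pitfall precisely by fixing $\opt_\I$ as the bound throughout the induction and invoking the inequality $\opt_{V'}\le\opt_\I$ only where a smaller tour is helpful, namely to bound the matching cost $c(M)$. The remaining points are routine: that the two matchings of an even cycle partition its segments, that an odd cycle has exactly $n$ maximum matchings distributing its segments symmetrically, and that shortcutting does not increase tour length all follow directly from the definition of the shortest-path metric.
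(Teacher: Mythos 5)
Your proof is correct, but it takes a genuinely different route from the one the paper relies on. The paper does not prove this lemma itself; it imports it as Lemma~1 of Rosenkrantz, Stearns and Lewis, whose argument sorts the vertices by decreasing $l$-value as $l_{v_1}\ge\dots\ge l_{v_n}$ and shows, for each $k$, that $\opt_\I \ge 2\sum_{i=k+1}^{\min(2k,n)} l_{v_i}$, by shortcutting the optimal tour onto the $\min(2k,n)$ vertices of largest $l$-value and charging each tour edge, via property (i), to its smaller-$l$ endpoint (each vertex absorbing at most two charges); summing this over $k=2^0,2^1,\dots,2^{\lceil\log n\rceil-1}$ and adding $l_{v_1}\le\frac12\opt_\I$ from property (ii) gives the bound. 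You instead run a halving induction: shortcut the optimal tour to the current vertex set, extract from the resulting cycle a matching of size $\lfloor n/2\rfloor$ and cost at most $\frac12\opt_\I$ (the two alternating perfect matchings when $n$ is even; averaging over the $n$ maximum matchings of the odd cycle, each edge lying in exactly $(n-1)/2$ of them, when $n$ is odd), delete the smaller-$l$ endpoint of each matching edge at total $l$-cost at most $\frac12\opt_\I$, and recurse on the $\lceil n/2\rceil$ survivors. Your two safeguards are exactly the right ones: fixing $\opt_\I$, rather than the shrinking optimum of the surviving set, as the benchmark carried through the recursion (so condition (ii) and the base case survive), and invoking $\opt_{V'}\le\opt_\I$ only where shrinkage helps, namely to bound $c(M)$; the arithmetic $\lceil n/2\rceil\le 2^{\lceil\log n\rceil-1}$ for $n\ge 2$ makes the logarithmic rank drop by one per round, yielding $\frac12\lceil\log n\rceil\,\opt_\I$ over all rounds plus $\frac12\opt_\I$ for the final vertex. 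Both arguments rest on the same two facts---shortcutting in the shortest-path metric does not increase tour cost, and property (i) lower-bounds each tour segment by the smaller endpoint's $l$-value---but your matching-based halving replaces the global sorted charging scheme with a local, self-contained recursion: the classical charging proof is slightly shorter and needs no case split, while yours makes the payment of $\frac12\opt_\I$ per doubling scale structurally explicit, at the price of the odd-cycle averaging step needed to guarantee a cheap maximum matching when $n$ is odd.
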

	\begin{proof}[Proof of \Cref{lemma_nearest_neighbor_cost}]
		We assign to each~$v \in V$ a value~$l_v$ such that~$\cn_i = \sum_{v\in V} l_v$ and both conditions of Lemma~\ref{lem:Rosenkrantz} hold, which then implies the statement of \Cref{lemma_nearest_neighbor_cost}.
		
		Consider a current vertex~$s$ before a traversal in the \nn phase, i.e., a traversal in Line~\ref{line_robust_nn_traversal}.
		Observe that in this manner no vertex is considered twice since~$s$ is always the most recently newly explored vertex in~$G$ (or, in case of the very first iteration, the initially given start vertex). 	
		Let $l_s = c(P)$, where $P$ is the shortest~$v$-$u'$-path traversed by \nn in Line~\ref{line_robust_nn_traversal} and $l_v = 0$ for all remaining vertices. Then, $\cn_i = \sum_{v\in V} l_v$.
		It remains to show that both conditions of \Cref{lem:Rosenkrantz} are satisfied.

		Condition~(i): Suppose~$l_v, l_u > 0$ as otherwise the condition trivially holds. Moreover, assume without loss of generality that~$v$ was explored before~$u$. Thus, when traversing to the nearest neighbor~$u'$ of~$v$, the vertex~$u$ was still unexplored. 
		By definition,~$l_v$ corresponds to the cost of a shortest~$v$-$u'$-path.
		Since~$u$ was still unexplored but~$u'$ was selected as the nearest neighbor of~$v$, it follows that the shortest path between~$v$ and~$u$ cannot be shorter than the shortest path between~$v$ and~$u'$. Thus, the shortest path between~$v$ and~$u$ has cost at least~$l_v$.
		
		Condition~(ii): Again, assume~$l_v > 0$. 
		The value~$l_v$ corresponds to the cost of the shortest path between~$v$ and some vertex~$u$.
		The condition easily follows when viewing~$\opt_\I$ as two paths connecting~$v$ and~$u$.	
	\end{proof}
	
	Using \Cref{lemma_nearest_neighbor_cost}, we show the theorem.
	
	\begin{proof}[Proof of Theorem~\ref{theorem_robustification}]
		Fix $\lambda > 0$, an algorithm \alg for the graph exploration problem, and an instance~$\I$.
		Denote by $\rob_\I$ the cost incurred on instance \I by the robustification scheme \rob applied to \alg with parameter $\lambda$.
		We show~$\rob_\I \le (3+4\lambda) \alg_\I$ and~$\rob_\I \le \big(1+\frac{1}{2\lambda}\big) (\lceil\log(n)\rceil+1) \opt_\I$ separately.
		For each iteration~$i$ of the outer while loop, denote by~$\cn_i$ the traversal cost incurred by the inner while loop (Line~\ref{line_robust_nn_traversal}), and by~$\ca_i$ the cost of the traversal in Line~\ref{line_robust_A_traversal}.
		Then,~$\rob_\I = \sum_{i} (\ca_i + \cn_i)$.

		\emph{Proof of~${\rob_I \le (3+4\lambda) \alg_I}$:~~}
		For iteration~$i$ of the outer while loop, in which~$\alg$ wants to explore~$u$, let~$\pat_i^\alg$ be the shortest~$s$-$u$-path in Line~\ref{line_robust_A_path}.
		Since Line~\ref{line_robust_simulate} resp.\ Algorithm~\ref{ALG_A} only simulate traversals, the~$\pat_i^\alg$ may not match the actual traversals that are due to algorithm~$\alg$.
		Specifically, it might be the case that~\mbox{$\sum_{i} c(\pat_i^\alg) \neq \alg_\I$.}
		However, since~$\pat_i^\alg$ is a shortest path in the currently known graph~$G$ which contains~$G_\alg$ after executing the simulated traversals, it cannot exceed the sum of the corresponding simulated traversals. Thus,~\mbox{$\sum_{i} c(\pat_i^\alg) \le \alg_\I$}.

		Consider an %
		iteration~$i$ and the traversal cost~$\cn_i + \ca_i$ incurred during this iteration. %
		We start by upper bounding~$\cn_i$.
		Let~$\kappa_i'$ be the cost of the inner while loop (Lines~\ref{line_robust_while} to~\ref{line_robust_nn_end}) \emph{excluding} the last iteration.
		By definition,~$\kappa_i' < \lambda \cdot c(\pat_i^\alg)$.
		
		Let~$\pat_i$ be the path traversed in the last iteration of the inner while loop,~$s'$ its start vertex, and~$u'$ its end vertex. 
		Recall that~$u$ is the endpoint of~$\pat_i^\alg$. 
		Before executing the inner while loop, %
		the cost of the shortest path from the current vertex to~$u$ was~$c(\pat_i^\alg)$.
		By executing the inner while loop, excluding the last iteration, the cost of the shortest path from the new current vertex to~$u$ can only increase by at most~$\kappa_i' < \lambda \cdot c(\pat_i^\alg)$ compared to the cost of~$\pat_i^\alg$.
		Since~$\pat_i$ is the path to the nearest neighbor of the current vertex,~$c(\pat_i)$ cannot be larger than the cost of the shortest path to vertex~$u$. 
		Thus,~$c(\pat_i) \le c(\pat_i^\alg) + \kappa_i' < (1+\lambda) \cdot c(\pat_i^\alg)$, and~$\cn_i = \kappa_i' + c(\pat_i) \le (1+2\lambda) \cdot c(\pat_i^\alg)$.
		
		To bound $\ca_i$, consider the traversal of the shortest~$s$-$u$-path in Line~\ref{line_robust_A_traversal}. 
		Before executing the inner while loop, the cost of the shortest path from the current vertex to~$u$ was~$c(\pat_i^\alg)$.
		By executing the while loop, the cost of the shortest path from the new current vertex to~$u$ can increase by at most~$\cn_i \le (1+2\lambda) \cdot c(\pat_i^\alg)$ compared to~$c(\pat_i^\alg)$.
		This implies~$\ca_i \le (2+2\lambda) \cdot c(\pat_i^\alg)$.
		Using~$\ca_i + \cn_i \le (3+4\lambda) \cdot c(\pat_i^\alg)$, we conclude
		\[
			\rob_\I = \sum_{i} \big(\ca_i + \cn_i\big) \le (3+4\lambda) \sum_{i} c(\pat_i^\alg) \le (3+4\lambda) \alg_\I.
		\]

		\emph{Proof  of~$\, \rob_\I \le \big(1+\frac{1}{2\lambda}\big) (\lceil\log(n)\rceil+1) \opt_\I $:~~}
		We have~$\ca_i \le c(\pat_i^\alg) + \cn_i$. If the inner while loop was aborted due to $s=u$, then $\ca_i = 0$. Otherwise, $\cn_i \ge \lambda \cdot c(\pat_i^\alg)$, and thus, $\ca_i \le (1+\frac{1}{\lambda}) \cn_i$. We conclude $\rob_\I = \cn+\sum_i\ca_i \leq \big(2+\frac{1}{\lambda}\big) \cn$. Lemma~\ref{lemma_nearest_neighbor_cost} directly implies the result. 
	\end{proof}
	
	\subsection{Reducing the overhead for switching %
	algorithms}
	\label{subsec:modifiedR}

	The robustification scheme~$\rob$ balances between the execution of a blackbox algorithm~$\alg$ and a nearest-neighbor search with the parameter~$\lambda$, that allows us to configure the proportion at which the algorithms are executed. Even for arbitrarily small~$\lambda > 0$, the worst-case cost of~$\rob$ on instance~$\I$ is still~$(3+4\lambda) \alg_\I \approx 3 \alg_\I$. The loss of the factor~$3$ is due to the overhead created by switching between the execution of algorithm~$\alg$ and the nearest-neighbor search.
	Hence, we modify~$\rob$ to reduce this overhead. For a fixed~$\lambda$, this leads to a slighly worse worst-case guarantee. However, instances leading to a cost of roughly $3 \alg_\I$ are very particular, and here the modification significantly improves the average-case performance as our experimental results in~\Cref{sec_experiments}
	show.
	
	We first describe how the modified robustification scheme~$\overline{\rob}$ (cf. Algorithm~\ref{Alg_m_robustification}) differs from Algorithm~\ref{Alg_robustification}.
	Intuitively, the modified robustification scheme~$\overline{\rob}$ reduces the overhead for switching between algorithms by extending the individual phases.
	Specifically, to not continuously interrupt~\alg, we introduce a budget for its execution that is now proportional to the cost of \emph{all} past nearest neighbor phases. 
	In turn, we 
	also increase the budget for \nn to match these larger phases of executing \alg:
	
	The modified scheme uses the budget variable $\nc$ to keep track of the total cost incurred by following~\nn (cf. Lines~\ref{line_mrobust_nnbudget} and~\ref{line_mrobust_nnbudget_increase}).
	At the beginning of each iteration of the outer while loop, the algorithm exploits this budget to follow $\alg$ until the next traversal would surpass the parametrized budget $\frac{1}{\lambda} \cdot \nc$ (cf. Lines~\ref{line_mrobust_aloop_start} and~\ref{line_mrobust_aloop_end}).
	The cost incurred by those traversals in the current iteration of the outer while loop is stored in variable $\ac$ (cf. Lines~\ref{line_mrobust_ac} and~\ref{line_mrobust_ac_increase}).
	
	In order to compensate for this additional phase of following $\alg$, the while loop of Line~\ref{line_mrobust_while} follows $\nn$ until either the cost is at least $\lambda \cdot (\ac + c(P_u^\alg))$ or the cost is at least $\lambda \cdot \ac$ and the last explored vertex is the previous target vertex of both \nn and \alg.
	In contrast to the original robustification scheme, it can happen that the target vertex $u$ of $\alg$ is explored during the execution of $\nn$ but we still need to continue following $\nn$ in order to reach a cost of at least $\lambda \cdot \ac$.
	To handle those situations, the algorithm recomputes the next target vertex $u$ of $\alg$ and the corresponding path $\pat^\alg_u$ (cf. Lines~\ref{line_mrobust_recompute_u_start} to~\ref{line_mrobust_recompute_u_end}).
	
	Finally, again in contrast to the original version, the modified robustification scheme might explore the final vertex of the graph before the end of the current iteration of the outer while loop.
	Thus, the algorithm ensures to not execute further traversals if the graph is already fully explored, except for the return to the start vertex in Line~\ref{line_mrobust_return}.

	\begin{algorithm}[h]
	\DontPrintSemicolon
	\caption{Modified robustification scheme~$\overline{\rob}$.}
	\label{Alg_m_robustification}
	\KwIn{Partially explored graph~$G$, start vertex~$s$, algorithm~$\alg$, and parameter~$\lambda > 0$\;}
	$G_\alg \gets G$\tcp*[r]{subgraph revealed to~$\alg$}
	$\nc \gets$ $0$\label{line_mrobust_nnbudget}\;
	\While{$G$ has an unexplored vertex}{
		$\ac \gets 0$ \label{line_mrobust_ac}\;
		$u \gets$ next unexplored node to be visited by \alg, computed 
		via Algorithm~\ref{ALG_A}\label{line_mrobust_simulate}\;
		$\pat^\alg_u \gets$ shortest~$s$-$u$-path in~$G$\label{line_mrobust_A_path}\;
		\While{$\ac + c(\pat^{\alg}_u) \le \frac{1}{\lambda} \cdot \nc$ and $G$ has an unexplored vertex}{\label{line_mrobust_aloop_start}
			traverse $\pat^{\alg}_u$ and update $G$ and $G_\alg$\label{line_mrobust_ac_additional}\;	
			$s \gets u$, $\ac \gets \ac + c(P_u^\alg)$\label{line_mrobust_ac_increase}\;
			$u \gets$ next unexplored node to be visited by \alg, computed 
			via Algorithm~\ref{ALG_A}\;
			$\pat^\alg_u \gets$ shortest~$s$-$u$-path in~$G$\label{line_mrobust_aloop_end}\;
		}
		$u' \gets$ nearest unexplored neighbor of~$s$, $\;b \gets 0$\label{line_mrobust_nn_start}\;
		\While{(($b < \lambda \cdot (\ac + c(\pat_u^\alg) )$ and~$s \not= u$) or ($b < \lambda \cdot \ac$ and $s = u $)) and $G$ has an unexplored vertex}{\label{line_mrobust_while}
			traverse a
			shortest~$s$-$u'$-path $\pat_{u'}$ and update $G$\label{line_mrobust_nn_traversal}\;	
			$s \gets u'$, $\;b \gets b + c(\pat_{u'})$\;
			\If{$s = u$ and $b < \lambda \cdot \ac$}{\label{line_mrobust_recompute_u_start}
				update $G_\alg$ to reflect exploration of $u$\;
				$u \gets$ next unexplored node to be visited by \alg, computed 
				via Algorithm~\ref{ALG_A}\;
				$\pat^\alg_u \gets$ shortest~$s$-$u$-path in~$G$\;	\label{line_mrobust_recompute_u_end}
			}
			$u' \gets$ nearest unexplored neighbor of~$s$\label{line_mrobust_nn_end}\;
		} 
		$\nc \gets \nc + b$\label{line_mrobust_nnbudget_increase}\;
		\If{$G$ has an unexplored vertex}{
			traverse a shortest known path to~$u$, set~$s \gets u$ and update $G$\label{line_mrobust_A_traversal}\;
			update $G_\alg$ to reflect exploration of $u$\; 
		}		
	}
	traverse a shortest part in $G$ to the start vertex\label{line_mrobust_return}\;
	\end{algorithm}

	The analysis of the modified robustification scheme~$\overline{\rob}$ remains essentially %
	the same. 
	The second term in the minimum of the worst-case guarantee is slightly higher since the algorithm might terminate directly after executing an extended phase of \alg without giving \nn a chance to compensate for possible errors. 
	The first term, i.e.,~$(3+4\lambda) \alg_\I$, remains the same since the cost of an iteration of the outer while loop remains bounded by $(3+4\lambda)$ times the cost of an extended phase of \alg plus the cost of reaching the next target of \alg after that phase. %

\begin{restatable}{theorem}{modifiedRobustification}
	\label{theorem_modified_robustification}
	Given an algorithm~$\alg$ for the online graph exploration problem and %
	$\lambda > 0$, the modified robustification scheme~$\overline{\rob}$ solves a graph exploration instance~$\I$ with cost
	\[
	\overline{\rob}_\I \le \min\Big\{(3+4\lambda) \alg_\I, \left(1+\frac{1}{\lambda}\right) (\lceil\log(n)\rceil+1) \opt_\I \Big\}.
	\]
\end{restatable}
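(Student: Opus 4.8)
The plan is to mirror the proof of \Cref{theorem_robustification} and bound the two terms of the minimum separately, decomposing the cost over the iterations of the outer while loop. For iteration~$i$, let $\ac_i$ be the cost of the extended \alg-phase (Line~\ref{line_mrobust_ac_additional}), $\cn_i$ the cost of the \nn-phase (Line~\ref{line_mrobust_nn_traversal}), and $\ca_i$ the cost of the concluding traversal to the \alg-target (Line~\ref{line_mrobust_A_traversal}); put $\cn=\sum_i\cn_i$. As in that proof, $\overline{\rob}_\I=\sum_i(\ac_i+\cn_i+\ca_i)$ up to the final return to~$s$, which is handled identically and affects none of the leading constants. First I would check that \Cref{lemma_nearest_neighbor_cost} still applies: every traversal in Line~\ref{line_mrobust_nn_traversal} starts at the most recently explored vertex and proceeds to its nearest unexplored neighbor, and all these start vertices are pairwise distinct across phases, so the charging $l_s=c(P)$ of its proof is valid and gives $\cn\le\tfrac12(\lceil\log n\rceil+1)\opt_\I$.

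For $\overline{\rob}_\I\le(3+4\lambda)\alg_\I$ I would argue per iteration. Let $P_i^\alg$ be the shortest path to the \alg-target~$u$ that is active when the \nn-phase terminates, i.e., after all recomputations in Lines~\ref{line_mrobust_recompute_u_start}--\ref{line_mrobust_recompute_u_end}, and set $Q_i=\ac_i+c(P_i^\alg)$, the budget governing that phase. Excluding its last step the \nn-phase costs less than $\lambda Q_i$; since the last step moves to the nearest unexplored neighbor, its cost is at most the current distance to~$u$, which exceeds $c(P_i^\alg)$ by at most the \nn-movement incurred so far. Keeping the $\ac_i$- and $c(P_i^\alg)$-contributions separate (rather than folding both into $Q_i$) gives $\cn_i<2\lambda\ac_i+(1+2\lambda)c(P_i^\alg)$ and, by the same triangle inequality, $\ca_i\le c(P_i^\alg)+\cn_i$ (and $\ca_i=0$ if the phase ended at~$u$). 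Summing the three contributions yields $\ac_i+\cn_i+\ca_i\le(1+4\lambda)\ac_i+(3+4\lambda)c(P_i^\alg)\le(3+4\lambda)Q_i$. Finally, the $\ac_i$ are genuine \alg-traversals and each $c(P_i^\alg)$ is a shortest path in $G\supseteq G_\alg$ between consecutive \alg-targets, so the charged segments form a subsequence of \alg's tour and $\sum_i Q_i\le\alg_\I$, whence $\overline{\rob}_\I\le(3+4\lambda)\alg_\I$.

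For the bound involving $\opt_\I$ I would instead use the amortized budgeting. Whenever \nn is allowed to run to completion in iteration~$i$, it does not stop before $b\ge\lambda\ac_i$, hence $\ac_i\le\tfrac1\lambda\cn_i$; and if that phase ends with $s\ne u$ then $\cn_i\ge\lambda(\ac_i+c(P_i^\alg))$ together with $\ca_i\le c(P_i^\alg)+\cn_i$ gives $\ac_i+\ca_i\le(1+\tfrac1\lambda)\cn_i$, while $\ca_i=0$ if it ends at~$u$. Summed over these regular iterations this contributes at most $(1+\tfrac1\lambda)\cn$. The exception is the single terminal iteration, in which the graph may already be fully explored during the extended \alg-phase (or before \nn reaches its budget), so \nn gets no chance to compensate; here only the static guarantee $\ac_i\le\tfrac1\lambda\nc\le\tfrac1\lambda\cn$ from the budget test in Line~\ref{line_mrobust_aloop_start} is available, and $\ca_i=0$. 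Adding it gives $\sum_i(\ac_i+\ca_i)\le(1+\tfrac2\lambda)\cn$, so $\overline{\rob}_\I\le(2+\tfrac2\lambda)\cn\le(1+\tfrac1\lambda)(\lceil\log n\rceil+1)\opt_\I$ by \Cref{lemma_nearest_neighbor_cost}.

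I expect the main obstacle to be the bookkeeping forced by the recomputation step: within one \nn-phase the \alg-target~$u$ may advance several times (Lines~\ref{line_mrobust_recompute_u_start}--\ref{line_mrobust_recompute_u_end}), so one must ensure the charging uses the final path $P_i^\alg$ and that the intermediate targets, explored ``for free'' by \nn, do not spoil the telescoping $\sum_i Q_i\le\alg_\I$. The second delicate point, which is exactly what weakens the constant to $1+\tfrac1\lambda$ (versus $1+\tfrac1{2\lambda}$ in \Cref{theorem_robustification}), is isolating the one terminal iteration whose extended \alg-phase can exhaust the graph without a compensating \nn-phase and charging its cost to the accumulated budget $\nc$ rather than to $\cn_i$.
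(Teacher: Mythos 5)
Your proposal is correct and takes essentially the same route as the paper's own proof: the same per-iteration decomposition with the same bounds ($\cn_i \le (1+2\lambda)\,c(\pat_i^\alg) + 2\lambda\ac^i$ and final traversal cost at most $c(\pat_i^\alg)+\cn_i$, yielding the $(3+4\lambda)$ factor), the same charging argument $\sum_i \big(\ac^i + c(\pat_i^\alg)\big) \le \alg_\I$ via shortest paths in $G \supseteq G_\alg$, and the same isolation of the single terminal iteration, bounded by the budget test of the first inner loop, before applying \Cref{lemma_nearest_neighbor_cost} to get the $\big(1+\frac{1}{\lambda}\big)(\lceil\log n\rceil+1)\opt_\I$ bound. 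The only differences are cosmetic: your notation splits the paper's $\ca_i$ into the extended-phase cost and the closing traversal, you verify explicitly that \Cref{lemma_nearest_neighbor_cost} still applies (the paper merely asserts it), and your terminal-iteration case analysis is stated slightly more directly than the paper's ``worst case occurs when'' reasoning.
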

\begin{proof}
	Fix $\lambda > 0$, an algorithm \alg for the graph exploration problem, and an instance~$\I$.
	Denote by $\mrob_\I$ the cost incurred on instance \I by the modified robustification scheme $\overline{\rob}$ applied to \alg with parameter $\lambda$.
	We show~$\mrob_\I \le (3+4\lambda) \alg_\I$ and~$\mrob_\I \le \frac{1}{2} (1+\frac{1}{\lambda}) (\lceil\log(n)\rceil+1) \opt_\I$ separately.
	
	For each iteration~$i$ of the outer while loop, denote by~$\cn_i$ the traversal cost incurred by the inner while loop (Line~\ref{line_robust_nn_traversal}), and by~$\ca_i$ the cost of the traversal in Lines~\ref{line_mrobust_ac_additional} and~\ref{line_mrobust_A_traversal}.
	Then,~$\rob_\I = \sum_{i} (\ca_i + \cn_i)$.	
	
	\textit{Proof of~${\rob_I \le (3+4\lambda) \alg_I}$:~~}
	For each iteration~$i$ of the outer while loop, let~$\pat_i^\alg$ be the shortest~$s$-$u$-path $P_u^\alg$ that is considered in the last execution of one of the Lines~\ref{line_mrobust_A_path},~\ref{line_mrobust_aloop_end} or~\ref{line_mrobust_recompute_u_end}.
	Further, let $\ac^i$ denote the traversal cost incurred by all executions of Line~\ref{line_mrobust_ac_additional} in iteration $i$.
	
	Since Algorithm~\ref{ALG_A} only simulates traversals and not all traversals of \alg are considered due to the re-computation of $\pat^\alg_u$ (cf.~Lines~\ref{line_mrobust_A_path},~\ref{line_mrobust_aloop_end} and~\ref{line_mrobust_recompute_u_end}), the cost $\ac^i + c(\pat_i^\alg)$ may not match the actual traversal cost that are due to algorithm~$\alg$ in the iteration.
	Specifically, it might be the case that~$\sum_{i} \ac^i + c(\pat_i^\alg) \neq \alg_\I$.
	However, since~$\pat_i^\alg$ is a shortest path in the currently known graph~$G$ which contains~$G_\alg$
	after executing the simulated traversals, it must be shorter than the sum of the corresponding simulated traversals. 	
	Thus,~\mbox{$\sum_{i} \ac^i + c(\pat_i^\alg) \le \alg_\I$.}
	
	Consider an arbitrary iteration~$i$ and the traversal cost incurred during this iteration, i.e.,~$\cn_i + \ca_i$.
	We start by upper bounding~$\cn_i$.
	Let~$\kappa_i'$ be the cost of the inner while loop (Lines~\ref{line_mrobust_while} to~\ref{line_mrobust_nn_end}) \emph{excluding} the last iteration.
	By definition,~$\kappa_i' < \lambda \cdot (\ac^i + c(\pat_i^\alg))$ %
	
	Let~$\pat_i$ be the path traversed in the last iteration of the inner while loop,~$s'$ its start vertex, and~$u'$ its end vertex. 
	Recall that~$u$ is the endpoint of~$\pat_i^\alg$. 
	Before executing the iterations of the inner while loop that remain to be executed after $\pat_i^\alg$ was computed, %
	the cost of the shortest path from the current vertex to~$u$ was~$c(\pat_i^\alg)$.
	By executing the remaining iterations of the inner while loop, excluding the last iteration, the cost of the shortest path from the new current vertex to~$u$ can only increase by at most~$\kappa_i' < \lambda \cdot (\ac^i + c(\pat_i^\alg))$ compared to the cost of~$\pat_i^\alg$.
	Since~$\pat_i$ is the path to the nearest neighbor of the current vertex,~$c(\pat_i)$ cannot be larger than the cost of the shortest path to vertex~$u$. 
	Thus,~$c(\pat_i) \le c(\pat_i^\alg) + \kappa_i' < (1+\lambda) \cdot c(\pat_i^\alg) + \lambda \cdot \ac^i$, and
	\[
		\cn_i = \kappa_i' + c(\pat_i) \le(1+2\lambda) \cdot c(\pat_i^\alg) + 2\lambda \cdot \ac^i.
	\]
	
	Consider $\ca_i = \ac^i + l_i$ where $l_i$ denotes the traversal cost of Line~\ref{line_mrobust_A_traversal}.
	To bound $l_i$, consider the traversal of the shortest~$s$-$u$-path in Line~\ref{line_mrobust_A_traversal}. 
	When $\pat^\alg_i$ was computed in the last execution of the lines Lines~\ref{line_mrobust_A_path},~\ref{line_mrobust_aloop_end} or~\ref{line_mrobust_recompute_u_end}, the cost of the shortest path from the current vertex to~$u$ was~$c(\pat_i^\alg)$.
	Afterwards, the cost of the shortest path from the new current vertex to~$u$ can increase only because of traversals in Line~\ref{line_mrobust_nn_traversal}.
	Thus, the cost can increase by at most~$\cn_i \le(1+2\lambda) \cdot c(\pat_i^\alg) + 2\lambda \cdot \ac^i$ compared to~$c(\pat_i^\alg)$.
	This implies~$l_i \le c(\pat^\alg_i) + \cn_i \le (2+2\lambda) \cdot c(\pat_i^\alg) + 2\lambda \cdot \ac^i$ and, therefore, $\ca_i = \ac^i + l_i \le (2+2\lambda) \cdot c(\pat_i^\alg) + (1+2\lambda) \cdot \ac^i$.
	
	It follows~$\ca_i + \cn_i \le (3+4\lambda) \cdot c(\pat_i^\alg) + (1+4\lambda) \cdot \ac^i$. 
	Note that, if the iteration is aborted early because all vertices are explored, the bound on $\ca_i + \cn_i$ in terms of $c(\pat_i^\alg) + \ac^i$ only improves.
	We conclude
	\begin{align*}
	\mrob_\I &= \sum_{i} \ca_i + \cn_i \\ &\le (3+4\lambda) \Big(\sum_{i} c(\pat_i^\alg)\Big) + (1+4\lambda)  \Big(\sum_i \ac^i\Big)\\ &\le (3+4\lambda) \cdot \alg_\I.
	\end{align*}
	Observe that the worst-case of $\mrob_\I \approx (3+4\lambda) \cdot \alg_\I$ can only occur if the cost $\sum_{i} c(\pat_i^\alg)$ dominates the cost $\sum_i \ac^i$, i.e., if the cost incurred by the last traversals of \alg per iteration dominates the cost of all other traversals by \alg.  
	For larger $\sum_i \ac^i$, the bound on $\mrob_\I$ improves, which is a possible explanation for the success of $\mrob$ in the experimental analysis.
	
	\textit{Proof  of~$ \rob_\I \le \big(1 + \frac1\lambda\big) (\lceil\log(n)\rceil+1) \opt_\I $:} 
	Consider an arbitrary iteration $i$ of the outer while loop that is \emph{not} the last one, i.e., $i$ is a full iteration, and
	consider again $\ca_i = \ac^i + l_i$ where $l_i$ denotes the traversal cost of Line~\ref{line_mrobust_A_traversal}.	
	
	As argued before, we have~$l_i \le c(\pat_i^\alg) + \cn_i$.
	If the inner while loop was aborted because $s=u$ and~$\cn_i \ge \lambda \cdot \ac^i$, then $l_i = 0$.
	This implies $\ca_i = \ac^i + l_i = \ac^i \le \frac{1}{\lambda} \cdot \cn_i < (1 + \frac{1}{\lambda}) \cdot \cn_i$.
	If the inner while loop was aborted because $s\not=u$ and $\cn_i \ge \lambda \cdot (\ac^i + c(\pat^\alg_i) )$, then $(\ac^i + c(\pat^\alg_i)) \le \frac{1}{\lambda} \cdot \cn_i$ which implies~$\ca_i = \ac^i + l_i \le \ac^i + c(\pat^\alg_i) + \cn_i \le (1+\frac{1}{\lambda}) \cdot \cn_i$.
	Thus, we can conclude $\cn_i + \ca_i \le (2 + \frac{1}{\lambda})  \cdot \cn_i$.
	
	Let $\bar{\kappa}^N = \sum_{i} \cn_i$ and $\bar{\kappa}^\alg = \sum_i \ca_i$ \emph{ignoring} the last iteration $j$ of the outer while loop. 
	Then, $\cn = \bar{\kappa}^N + \cn_j$, $\ca = \bar{\kappa}^\alg + \ca_j$ and $\mrob_\I = \bar{\kappa}^N + \cn_j + \bar{\kappa}^\alg + \ca_j \le (2+ \frac{1}{\lambda}) \cdot \bar{\kappa}^N + \ca_j + \cn_j$.
	When bounding $\mrob_\I$ in terms of $\cn$, the worst-case occurs if $\ca_j$ is as large as possible and $\cn_j$ is as small as possible. 
	Thus, the worst-case occurs if the graph is explored before the last iteration executes the second inner while loop.
	In that case, $\cn_j = 0$ and, by definition of the first inner while loop (cf. Line~\ref{line_mrobust_aloop_start}), $\ca_j \le \frac{1}{\lambda} \cdot \bar{\kappa}^N$.
	We can conclude that $\mrob_\I \leq (2+ \frac{1}{\lambda}) \cdot \bar{\kappa}^N + \ca_j + \cn_j \le (2+ \frac{2}{\lambda}) \cdot \cn$. 
	Since the proof of Lemma~\ref{lemma_nearest_neighbor_cost} is not affected by the modifications of $\mrob$, we can apply Lemma~\ref{lemma_nearest_neighbor_cost} on $\cn$. 
	This implies the desired result.	
\end{proof}
	
	\section{Online graph exploration with untrusted predictions}
	\label{sec_learningaugmented}
	In this section, we apply the previously introduced robustification scheme in the context of learning-augmented algorithms and present an algorithm that satisfies the criteria in \Cref{theorem_learningaugmented}.
	This algorithm is provided with untrusted predictions that come in the form of a fixed exploration order of the vertices, given by a spanning tree %
	or a tour. 
	The learnability of such predictions is discussed in \Cref{sec_pac_learnability}.

	\subsection{A learning-augmented online algorithm}
	We consider prediction models which upfront fix an exploration order~$\tau_p$ of the vertices. Such an order can be predicted directly (\emph{tour predictions}) or is given by the traversal order of a \emph{Depth First Search} (\dfsalg) on a predicted spanning tree~$T_p$ (\emph{tree predictions}). 
	Recall that a prediction is a function that outputs for a given exploration state an explorable vertex, and, given an order~$\tau_p$, this vertex is the first unexplored vertex in~$\tau_p$ w.r.t.\ the current state. Due to this mapping, we also call~$\tau_p$ a prediction.
	Denote by~$c(\tau_p)$ the cost of executing \fp with the prediction~$\tau_p$. 
	A \emph{perfect prediction}~$\tau^*$ is in the case of tour predictions an optimal tour, and in the case of tree predictions the \dfsalg traversal order of a \fnew{Minimum Spanning Tree} (MST)~$T^*$. The prediction error is defined as~$\eta = c(\tau_p) - c(\tau^*)$.
	Regarding tree predictions, the definition of \dfsalg ensures that each edge in~$T^*$ is traversed at most twice, thus~$c(\tau^*) \le 2c(T^*)$. Using~$c(T^*) \le \opt_\I$, this implies the following lemma. 
	\begin{lemma}\label{lemma_mst_dfs}
		For 
		an instance~$\I$, following tree predictions has cost~$\fp_\I \le 2 \opt_\I + \eta$.
	\end{lemma}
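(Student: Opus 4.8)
The plan is to chain the definition of the prediction error together with two elementary bounds, so that no new machinery is needed. By definition of Follow the Prediction we have $\fp_\I = c(\tau_p)$, and by the definition of the error, $\eta = c(\tau_p) - c(\tau^*)$, i.e.\ $c(\tau_p) = c(\tau^*) + \eta$. Hence it suffices to show $c(\tau^*) \le 2\opt_\I$ for the perfect tree prediction, after which the claim follows immediately by substitution.

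First I would bound $c(\tau^*)$ by twice the cost of the minimum spanning tree $T^*$. Since $\tau^*$ is the \dfsalg\ preorder of $T^*$ and the searcher starts and ends at the root $s$ of $T^*$, following $\tau^*$ corresponds to moving, between consecutive preorder targets, along a shortest \emph{known} path in $G$. At the moment such a move is made, the tree path connecting the two targets is already known (all of its edges are incident to previously explored vertices), so the shortest known path is no longer than that tree path. Summing the tree paths over the entire preorder together with the final return to $s$ yields an Euler walk of $T^*$, in which each tree edge is used exactly twice; therefore $c(\tau^*) \le 2 c(T^*)$, as asserted in the text preceding the lemma. The second bound is the classical MST lower bound for the optimal tour, $c(T^*) \le \opt_\I$: the edges used by an optimal tour form a connected spanning subgraph, and deleting any single nonnegative-cost edge leaves a spanning tree whose cost is at least $c(T^*)$. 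Chaining the three inequalities gives
\[
\fp_\I = c(\tau_p) = c(\tau^*) + \eta \le 2 c(T^*) + \eta \le 2 \opt_\I + \eta .
\]

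This argument is essentially routine, and the only point requiring genuine care is the first bound. There one must justify that executing \fp\ in the \emph{actual} graph $G$ — where moves are shortest known paths between successive targets rather than literal tree traversals — never exceeds the cost of the Euler walk of $T^*$. Once the known-path-versus-tree-path comparison and the standard edge-doubling of a \dfsalg\ walk are in place, the remaining steps ($c(T^*)\le\opt_\I$ and the final substitution) are immediate, so I expect the main obstacle to be purely expository rather than technical.
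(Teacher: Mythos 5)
Your proof is correct and follows essentially the same route as the paper, which derives the lemma in the two sentences preceding its statement: $c(\tau^*) \le 2c(T^*)$ because the \dfsalg order traverses each edge of $T^*$ at most twice, combined with $c(T^*) \le \opt_\I$ and the definition $\eta = c(\tau_p) - c(\tau^*)$. Your added justification that \fp's shortest known paths between consecutive preorder targets never exceed the corresponding tree paths is a point the paper leaves implicit, but it is the same argument, just spelled out more carefully.
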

	Naively calling \fp might lead to an arbitrarily bad competitive ratio of \fp.
	Luckily, \Cref{theorem_robustification} provides us with a tool to mitigate this possibility and cure \fp 
	of its naivety. 
	Using \fp %
	within the robustification scheme of Algorithm~\ref{Alg_robustification} allows us to bound the worst-case performance.
	\lnew{Denote by $\rob(\fp,G)$ the performance of this strategy.} %
	With Lemma~\ref{lemma_mst_dfs}, and given an instance \I with tree predictions, \lnew{we can upper bound $\rob(\fp,G)$ by}
	\[
	\min\{(3+4\lambda)(\kappa \opt_\I + \eta), (1+\frac{1}{2\lambda}) (\lceil\log(n)\rceil+1) \opt_\I \},
	\]
	with $\kappa = 2$. Observe that, when considering tour predictions, $\fp_\I = \opt_\I + \eta$. Thus, we obtain the same bound on $\rob(\fp,G)$ but with $\kappa = 1$. This concludes the proof of \Cref{theorem_learningaugmented}.
	\subsection{PAC learnability of the predictions}
	\label{sec_pac_learnability}

	We now discuss the learnability of tree and tour predictions. To allow for predicted tours or trees that are consistent with the graph to be explored, we assume that the set of $n$ (labeled) vertices is fixed and the graph~$G$ is complete. This may seem like a strong restriction of general inputs to the graph exploration problem. However, notice that the cost~$c(e)$ of an edge~$e$ is still only revealed when the first endpoint of~$e$ is explored. There is no improved online algorithm known for this~special~case.

	Firstly, we show PAC learnability of tree predictions.
	Our goal is to predict a spanning tree in~$G$ of low expected cost
	when  
	edge costs are drawn randomly from an unknown distribution~\ud.
	We assume that we can sample cost vectors $c$ efficiently and i.i.d.\ from~$\ud$
	to obtain a training set.
	Denote by~$\hs$ the set of all labeled spanning trees in~$G$, and, for each~$T \in \hs$, let~$\eta(T,c) = c(T) - c(T^*)$ denote the error of~$T$ with respect to the edge costs~$c$, where~$T^*$ is an MST of~$G$ with respect to the edge costs~$c$. %
	As~$c$ is drawn randomly from~$\ud$, the value~$\eta(T,c)$ is a random variable. 
	Our goal is to learn a prediction~$T_p \in \hs$ that (approximately) minimizes the expected error~$\EX_{c \sim  \ud}[\eta(T,c)]$ over all~$T \in \hs$. 

	We show that there is an efficient learning algorithm that determines a tree prediction that has nearly optimal expected cost with high probability and has a sample size polynomial in $n$ and $\eta_{\max}$, an upper bound on~$\eta(T,c)$. The existence and value of such an upper bound depends on the unknown distribution~$\ud$. %
	Thus, to select the correct value of~$\eta_{\max}$ when determining the training set size, we require such minimal prior knowledge of~$D$, which does not seem unreasonable in applications.

	\begin{theorem}
		\label{Theorem_learnability}
		Let $\eta_{\max}$ be an upper bound on~$\eta(T,c)$ and $\bar{T} = \arg\min_{T\in \hs} \EX_{c\sim \ud}[\eta(T,c)]$.
		Under the assumptions above \jnew{(in particular, that the graph is complete and has a fixed number $n$ of vertices)}, and for any~$\eps, \delta \in (0,1)$, there exists a learning algorithm that returns a~$T_p \in \hs$ such that $\EX_{c \sim \ud}[\eta(T_p,c)] \le \EX_{c \sim \ud}[\eta(\bar{T},c)] + \eps$ with probability at least~$(1-\delta)$. It does so using a training set of size~$m \in \mathcal{O}\big(\frac{(n \cdot \log n - \log\delta)\cdot \eta_{\max}^2}{\eps^2}\big)$ and in time polynomial in~$n$ and~$m$.
	\end{theorem}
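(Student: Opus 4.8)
The plan is to treat this as a textbook empirical risk minimization (ERM) argument over the finite hypothesis class $\hs$ of all labeled spanning trees of $G$, where the two non-routine ingredients are bounding $|\hs|$ (which produces the $n\log n$ factor) and showing that ERM is computable in polynomial time despite $\hs$ being exponentially large.

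First, I would bound the size of the hypothesis class. Since $G$ is complete on a fixed set of $n$ labeled vertices, Cayley's formula gives $|\hs| = n^{n-2}$, so $\log|\hs| = (n-2)\log n = O(n\log n)$. Next I would establish uniform convergence via concentration plus a union bound. For a fixed tree $T$ the loss $\eta(T,c)\in[0,\eta_{\max}]$ is bounded, and given i.i.d.\ samples $c_1,\dots,c_m\sim\ud$ the empirical error $\hat\eta(T)=\frac1m\sum_{j=1}^m \eta(T,c_j)$ is an unbiased estimator of $\EX_{c\sim\ud}[\eta(T,c)]$. Hoeffding's inequality yields $\Pr[\,|\hat\eta(T)-\EX_{c\sim\ud}[\eta(T,c)]|\ge \eps/2\,]\le 2\exp(-m\eps^2/(2\eta_{\max}^2))$. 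A union bound over all $n^{n-2}$ trees shows that, for $m=\Omega\big(\tfrac{\eta_{\max}^2\,(n\log n-\log\delta)}{\eps^2}\big)$, with probability at least $1-\delta$ \emph{every} $T\in\hs$ simultaneously satisfies $|\hat\eta(T)-\EX_{c\sim\ud}[\eta(T,c)]|<\eps/2$; this is exactly where the claimed sample size enters. The learning algorithm then returns the ERM hypothesis $T_p=\arg\min_{T\in\hs}\hat\eta(T)$, and on the high-probability event the standard three-step chaining gives $\EX[\eta(T_p,c)]\le \hat\eta(T_p)+\eps/2\le \hat\eta(\bar T)+\eps/2\le \EX[\eta(\bar T,c)]+\eps$, which is the desired guarantee.

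The step I expect to be the crux is establishing a running time polynomial in $n$ and $m$, since a naive ERM search over the $n^{n-2}$ trees is infeasible. The key observation is that the loss decomposes as $\eta(T,c_j)=c_j(T)-c_j(T^*_j)$, where the subtracted term $c_j(T^*_j)$ is the MST cost for sample $j$ and does \emph{not} depend on $T$. Hence $\arg\min_T \hat\eta(T)=\arg\min_T \frac1m\sum_j c_j(T)=\arg\min_T \bar c(T)$, where $\bar c(e)=\frac1m\sum_j c_j(e)$ is the averaged cost of edge $e$. In other words, the ERM spanning tree is simply a minimum spanning tree with respect to the averaged edge costs $\bar c$, computable by Kruskal's or Prim's algorithm. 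Computing $\bar c$ costs $O(mn^2)$ and the single MST an additional $O(n^2)$, giving the claimed polynomial time; the main obstacle is precisely recognizing this reduction to a single MST computation, which hinges on the loss splitting into a $T$-linear part plus a $T$-independent offset.
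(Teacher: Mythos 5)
Your proposal is correct and follows essentially the same route as the paper's proof: ERM over the finite class of $n^{n-2}$ labeled spanning trees (Cayley's formula), the standard uniform-convergence bound for finite hypothesis classes yielding the stated sample complexity (you unpack via Hoeffding plus a union bound what the paper cites as the uniform convergence property), and crucially the same observation that the empirical error splits into a $T$-linear term plus a $T$-independent offset, so ERM reduces to a single MST computation on the summed (equivalently, averaged) edge costs.
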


	\begin{proof}
		We show how to adapt the classical \emph{Empirical Risk Minimization (ERM)} algorithm~(see, e.g.,~\cite{Vapnik1992}). %
		ERM first i.i.d.~samples a trainingset~$S=\{c_1,\ldots,c_m\}$ of~$m$ edge weight vectors from~$\ud$.
		Then, it returns a tree~$T_p \in \hs$ that minimizes the \emph{empirical error}~$\eta_S(T_p) = \frac{1}{m} \sum_{i=1}^{m} \eta(T_p,c_i)$.
		
		Recall that $\hs$ is the set of all labeled spanning trees in a complete graph of~$n$ vertices. By Cayley's formula~\cite{Cayley}, %
		$|\hs| = n^{n-2}$.
		Since~$\hs$ is finite, it satisfies the \emph{uniform convergence property} (cf.~\cite{Shalev2014}, or observe that~$\hs$ 
		has finite VC-dimension and cf.~\cite{Vapnik1992}). 
		Given a sample of size 
		\[
			m = \left\lceil \frac{2\log(2|\hs|/\delta)\eta_{\max}^2}{\eps^2} \right\rceil \in \mathcal{O}\left(\frac{(n \cdot \log(n) - \log(\delta))\cdot \eta_{\max}^2}{\eps^2}\right),
		\]
		it holds that~$\EX_{c \sim D}[\eta(T_p,c)] \le \EX_{c \sim D}[\eta(\bar{T},c)] + \eps$ with probability at least~$(1-\delta)$, where~$T_p$ is the spanning tree learned by ERM (cf.~\cite{Shalev2014,vapnik1999}). 
		It remains to bound the running time of ERM.
		
		While the sampling stage of ERM is polynomial in~$m$, naively computing a~$T_p \in \hs$ that minimizes the empirical error~$\eta_S(T_p)$ might be problematic because of~$\hs$'s exponential size.
		However, we show that %
		computing~$T_p$ corresponds to an MST problem, %
		well-known to be solvable in polynomial time. %
		
		Denote by $T_i^*$ an MST for 
		cost vector~$c_i$.
		By definition of~$\eta(T,c)$, we can write the empirical error as
		\[
		\eta_S(T) = \frac{1}{m} \left(\sum_{i=1}^m c_i(T) - \sum_{i=1}^m c_i(T_i^*)\right),
		\]
		for each~$T\in \hs$.
		Since~$\frac{1}{m}$ and~$\sum_{i=1}^m c_i(T_i^*)$ are independent of~$T$, the problem of computing a~$T\in \hs$ that minimizes~$\eta_S(T)$ corresponds to minimizing~$\eta_S'(T) = \sum_{i=1}^m c_i(T) = \sum_{e\in T} \sum_{i=1}^m c_i(e)$.
		This implies~$\eta'_S(T) = c_S(T)$, where the edge weights~$c_S$ are defined as~$c_S(e) = \sum_{i=1}^m c_i(e)$ for~$e \in E$. Hence, the problem reduces to computing an MST for edge weights~$c_S$.
	\end{proof}
	
	We show a similar result for learning a predicted tour. 
	Again, assume that $G=(V,E)$ is complete and has a fixed number $n$ of vertices. 
	Let $\ths$ be the set of all tours, i.e., the set of all permutations of $V$.
	Let~$\eta(\tau,c) = c(\tau) - c(\tau^*)$, where $\tau \in \ths$, $\tau^*$ is an optimal tour w.r.t.~cost vector $c$, and $c(\tau)$ is the cost of tour $\tau$ assuming that the next vertex $v$ in $\tau$ is always visited via a shortest path in the graph induced by the previous vertices in $\tau$ and $v$.
	Our goal is to learn a predicted tour~$\tau_p \in \ths$ that (approximately) minimizes the expected error~$\EX_{c \sim  \ud}[\eta(\tau,c)]$ over all~$\tau \in \ths$. 
	As $|\ths| = n! \in \mathcal{O}(n^n)$, and assuming an upper bound $\eta_{\max}$ on $\eta(\tau,c)$, we apply ERM with the same sample complexity as in~\Cref{Theorem_learnability}.
	However, the problem of computing a $\tau \in \ths$ that minimizes the empirical error in trainingset~$S$ contains TSP.
	Thus, %
	unless~$P \not= NP$, we settle for an exponential running time. 
	
	\begin{theorem}
		\label{Theorem_learnability_tours_1}
		Let $\eta_{\max}$ be an upper bound on~$\eta(\tau,c)$ and $\bar{\tau} = \arg\min_{\tau\in \ths} \EX_{c\sim \ud}[\eta(\tau,c)]$.
		Under the assumptions above \jnew{(in particular, that the graph is complete and has a fixed number $n$ of vertices)}, and for any~$\eps, \delta \in (0,1)$, there exists a learning algorithm that returns a~$\tau_p \in \ths$ such that
		$\EX_{c \sim \ud}[\eta(\tau_p,c)] \le \EX_{c \sim \ud}[\eta(\bar{\tau},c)] + \eps$ with probability at least~$(1-\delta)$. 
		It does so using a training set of size~$m \in \mathcal{O}(\frac{(n \cdot \log n - \log\delta)\cdot \eta_{\max}^2}{\eps^2})$ and in time polynomial in $m$ but exponential in $n$.
	\end{theorem}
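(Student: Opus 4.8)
The plan is to mirror the \emph{Empirical Risk Minimization (ERM)} argument from the proof of \Cref{Theorem_learnability}, replacing the class of spanning trees by the class $\ths$ of all tours. The learning algorithm samples a training set $S = \{c_1,\dots,c_m\}$ i.i.d.\ from $\ud$ and returns a tour $\tau_p \in \ths$ minimizing the empirical error $\eta_S(\tau) = \frac1m\sum_{i=1}^m \eta(\tau,c_i)$. As before, the generalization guarantee follows purely from finiteness of the hypothesis class: since $\ths$ is the set of permutations of $V$, we have $|\ths| = n!$, and Stirling's approximation gives $\log|\ths| = \Theta(n\log n)$, the same order as $\log|\hs| = (n-2)\log n$. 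Hence $\ths$ has the uniform convergence property, and substituting $|\ths|$ for $|\hs|$ in the agnostic PAC bound for finite classes with loss bounded in $[0,\eta_{\max}]$ yields exactly the stated sample size $m \in \mathcal{O}\big(\frac{(n\log n - \log\delta)\,\eta_{\max}^2}{\eps^2}\big)$ guaranteeing $\EX_{c\sim\ud}[\eta(\tau_p,c)] \le \EX_{c\sim\ud}[\eta(\bar\tau,c)] + \eps$ with probability at least $1-\delta$.

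The one genuinely different point is the running time of the minimization step. As in the tree case, the $\tau$-independent terms $c_i(\tau^*_i)$ in $\eta_S$ can be dropped, so finding $\tau_p$ reduces to minimizing $\sum_{i=1}^m c_i(\tau)$ over $\ths$; crucially, we therefore never need to compute the optimal tours $\tau^*_i$ themselves. Unlike the tree case, however, this objective does not collapse to a polynomially solvable problem: aggregating the samples still leaves a single-instance problem of minimizing an exploration cost $c(\tau)$ over all tours, which I would argue contains TSP. The cleanest way is to specialize to $m=1$ and a cost vector satisfying the triangle inequality; then in each induced subgraph along $\tau$ the shortest path between consecutive vertices is the direct edge, so $c_1(\tau)$ equals the length of the Hamiltonian path in the order of $\tau$, and minimizing it is metric (path-)TSP. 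Thus no polynomial-time minimizer exists unless $P = NP$.

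To stay within the claimed budget I would compute $\tau_p$ by brute force: enumerate all $n!$ tours and, for each, evaluate $\sum_i c_i(\tau)$, where a single evaluation requires $m$ shortest-path computations in induced subgraphs and is polynomial in $n$ and $m$. Keeping the minimizer gives a correct algorithm whose running time carries the factor $n!$ from the enumeration, hence is polynomial in $m$ but exponential in $n$; the sampling stage is polynomial in $m$ as before. I expect the main obstacle to be the clean justification that the empirical minimization really embeds TSP under the exploration-based cost $c(\tau)$ --- the subtlety being that induced shortest paths need not be single edges when the triangle inequality fails --- which the $m=1$ metric specialization above resolves.
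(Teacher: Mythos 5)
Your proposal is correct and follows essentially the same route as the paper's proof: apply ERM over the finite class $\ths$ with $\log|\ths| = \Theta(n\log n)$ to get the stated sample complexity, observe that the $\tau$-independent terms $\sum_i c_i(\tau_i^*)$ drop out so ERM reduces to minimizing $\sum_i c_i(\tau)$, note this problem contains TSP, and settle for brute-force enumeration over all $n!$ tours, giving time polynomial in $m$ but exponential in $n$. Your $m=1$ metric specialization just fills in a detail (why the exploration-based cost embeds TSP) that the paper asserts without elaboration.
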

	
	\begin{proof}
		Since $|\ths| = n! \in \mathcal{O}(n^n)$, and assuming an upper bound $\eta_{\max}$ on $\eta(\tau,c)$, we again apply ERM with the same sample complexity as in~\Cref{Theorem_learnability}.
		
		Using the same argumentation as in the proof of~\Cref{Theorem_learnability}, we can show that,
		for a training set $S = \{c_1,\ldots,c_m\}$ with $m \in \mathcal{O}(\frac{(n \cdot \log n - \log\delta)\cdot \eta_{\max}^2}{\eps^2})$, it holds $\EX_{c \sim \ud}[\eta(\tau_p,c)] \le \EX_{c \sim \ud}[\eta(\bar{\tau},c)] + \eps$ with probability at least~$(1-\delta)$, where $\tau_p$ is the element of $\ths$ that minimizes the empirical error $\eta_S(\tau_p) = \frac{1}{m} \sum_{i=1}^m \eta(\tau_p,c_i)$.
		Thus, it remains to argue about the computation of the $\tau_p \in \ths$ that minimizes 
		\[
			\eta_S(\tau_p) = \frac{1}{m} \sum_{i=1}^m \eta(\tau_p,c_i) = \frac{1}{m} \left(\sum_{i=1}^m c_i(\tau_p) - \sum_{i=1}^m c_i(\tau_i^*) \right),
		\]
		where $\tau_i^*$ is the optimal tour for cost vector $c_i$.
		Since $\frac{1}{m}$ and $\sum_{i=1}^m c_i(\tau_i^*)$ are independent of the actual predicted tour $\tau_p$, a tour $\tau_p$ minimizes the empirical error if and only if it minimizes $\sum_{i=1}^m c_i(\tau_p)$.
		Since the problem of computing this tour contains TSP, it is NP-hard and we settle for an exponential running time. 
		We can find $\tau_p$ by computing $\sum_{i=1}^m c_i(\tau)$ for each $\tau \in \ths$.
		This can be done in running time $\mathcal{O}(m \cdot |\ths|) = \mathcal{O}(m \cdot n^n)$ by iterating through all elements of $S \times \ths$.
	\end{proof}

	\section{Experimental analysis}
	\label{sec_experiments}

	We present the main results of our empirical experiments and discuss their significance with respect to our algorithms' performance.
	
	We analyze the performance of the robustification scheme for various instances, namely, real world city road networks, symmetric graphs of the TSPlib library~\cite{Reinelt91,tsplib}, %
	and special artificially generated graphs. 
	We use the \emph{empirical competitive ratio} as performance measure; for an algorithm $\alg$, it is defined as the average of the ratio $\alg_\I/\opt_\I$ over all input instances $I$ in our experiments. Since the offline optimum~$\opt_\I$ is the optimal TSP tour which is NP-hard to compute, we lower bound this value by the cost of an MST for instance $\I$, which we can compute efficiently. This leads to larger empirical competitive ratios, but the relative differences between any two algorithms remains the same.	
	
	To evaluate learning-augmented algorithms, we compute a (near) perfect prediction and iteratively worsen it to get further predictions. Again, due to the intractability of the underlying TSP problem, we use heuristics to determine a ``perfect'' prediction, namely Christofides' algorithm~\cite{christofides1976} and 2-opt~\cite{croes1958two-opt}. Such weaker input disfavors our algorithms as having better predictions can only improve the performance of our learning-augmented algorithms. 
	\anew{The \emph{relative prediction error} is defined as the ratio between the prediction error and the cost of an MST for the instance.}

	For the experiments, we consider the classical exploration algorithms depth first search (\dfsalg), nearest neighbor (\nn), and hierarchical depth first search (\hdfs), as well as the constant-competitive algorithm for graphs of bounded genus, including planar graphs, called \block~\cite{KalyanasundaramP93,MegowMS12}. Regarding learning-augmented algorithms we look at the algorithm that follows a prediction (\fp). 
	We denote by $\overline{\rob}(\alg,\lambda)$ the modified robustification scheme with parameter $\lambda > 0$ applied to an algorithm $\alg$.%
	
	\subsection{Implementation details}
	The simulation software is written in Rust (2018 edition) and available on GitHub\footnote{\url{https://github.com/Mountlex/graph-exploration}}. There are also instructions to reproduce all presented experiments, as well as generated predictions and simulation outputs which were used to create the figures in this document. 
	We executed all experiments in Ubuntu 18.04.5 on a machine with two AMD EPYC ROME 7542 CPUs (64 cores in total) and 1.96 TB RAM.
	
	\paragraph{General implementation details}
	We first note that edge costs are represented as unsigned integers.
	We use Prim's algorithm to compute an MST of a graph. For computing a minimum cost perfect matching in a general graph, as required in Christofides' algorithm, we use the \emph{Blossom V}~\cite{kolmogorov09} implementation of a modified variant of Edmonds' Blossom algorithm to solve this problem efficiently. Regarding the exploration algorithms, the implementation of \dfsalg always explores the cheapest unexplored edge first. 
	The nodes in the input graphs are indexed by integers, and all algorithms use this total order to break ties. The exploration always starts at the vertex with index~$0$.
	
	\paragraph{Prediction Generation}
	We compute perfect predictions using Christofides' algorithm~\cite{christofides1976} and further improve these by the 2-opt local search heuristic~\cite{croes1958two-opt}.
	Given such a ``perfect'' prediction, we generate worse predictions by iteratively increasing its total cost using the reversed 2-opt algorithm. 
	That is, we reverse a subsequence of length at least~$2$ such that this reversion increases the total cost.
	
	\subsection{Experimental results}
	
	In the following, we describe the results of our experiments.
	
	\paragraph{Comparison of robustification schemes} 
	For our experiments, we use the modified robustification scheme over the basic scheme. This is because we observed that the former performs overall better in our experiments although it has a slightly worse theoretic robustness guarantee. We present the results for both variants for the learning-augmented setting for the city of Stockholm, which is the largest city graph we consider, in \Cref{fig_robustifaction_comparison}. 
	The basic robustification scheme does not improve over \nn even for small prediction errors, whereas the modified variant gives a significantly better performance for this case. For large errors, the modified robustification scheme with~$\lambda=0.5$ does indeed perform worse than the basic scheme. For~$\lambda = 0.75$, it performs as good as the basic scheme for large errors, while it still improves upon \nn for the case of good predictions.
	
	\begin{figure}[tb]
		\begin{subfigure}[t]{0.45\textwidth}%
			\includegraphics[width=\textwidth]{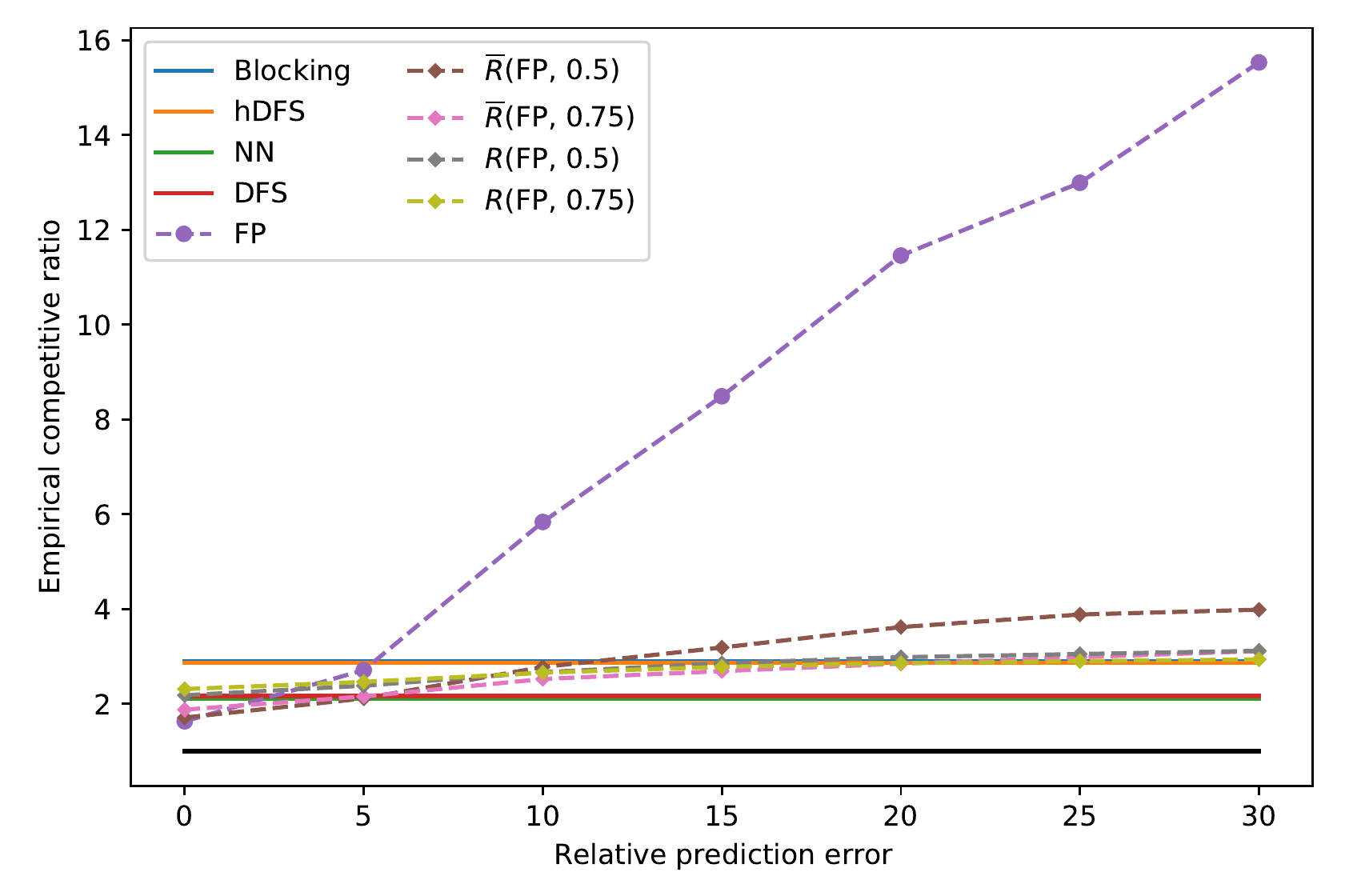}%
			\caption{Whole picture: results for large relative errors.}%
		\end{subfigure}\hfill%
		\begin{subfigure}[t]{0.45\textwidth}%
			\includegraphics[width=\textwidth]{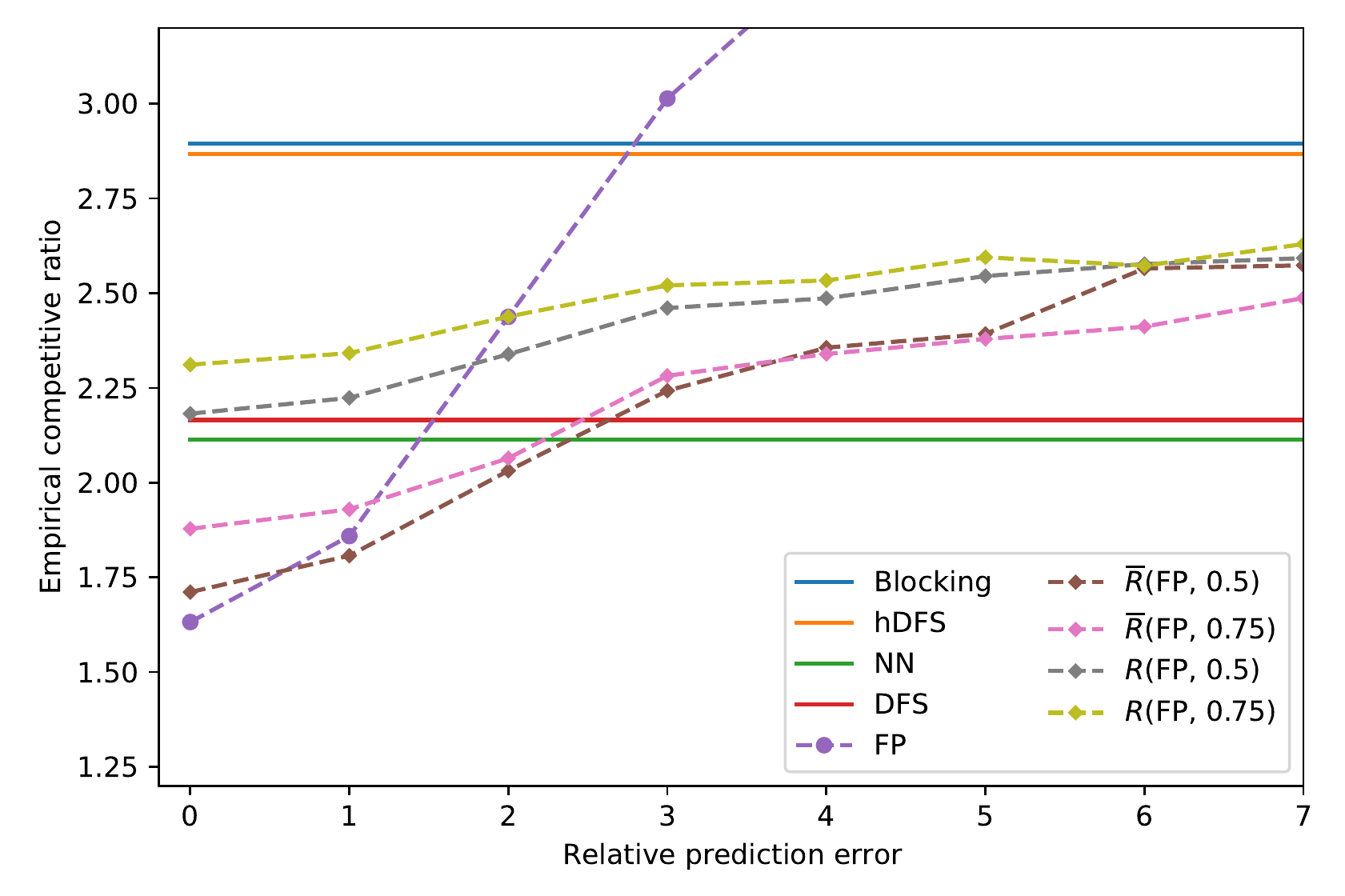}%
			\caption{Zoomed-in: results for small relative errors.}%
		\end{subfigure}
		\caption{Comparison between both robustification variants for the city of Stockholm.}
		\label{fig_robustifaction_comparison}
	\end{figure}
	
	\paragraph{TSPLib instances}
	We consider \anew{the 72} graphs of the %
	TSPlib library with at most 600 nodes (for performance reasons) and evaluate the classical exploration algorithms %
	as well as their robustified variants.
	We construct instances directly from the provided XML files and round all edge weights to integers.
	The results are displayed in \Cref{fig_tsplib}. 
	Observe that \nn outperforms \hdfs and \block. %
	While for small values of $\lambda$ the performance of the robustified algorithms stays close to that of their base variants, %
	it improves quickly with an increasing %
	$\lambda$ and eventually converges to that of \nn. This illustrates that if \nn performs well, our robustification scheme exploits this and improves algorithms %
	performing worse.
	Note that %
	TSPlib provides complete graphs and our implementation of \dfsalg explores a closest unexplored child first. Thus, \dfsalg and \nn act identically.
	We display the standard deviations of this experiment in~\Cref{tab_stddev_tsplib}.
	
	\begin{table}[b]
		\caption{Observed standard deviations in TSPLib experiments (\Cref{fig_tsplib}).}
		\label{tab_stddev_tsplib}
		\centering
		\begin{tabular}{lcc}
			\toprule
			\multirow{2}{*}{Algorithm} & \multicolumn{2}{c}{Standard deviation} \\ \cmidrule(lr){2-3}
			& min & max \\
			\midrule	
			\block & \multicolumn{2}{c}{0.258324} \\
			\hdfs & \multicolumn{2}{c}{0.228537} \\
			\nn & \multicolumn{2}{c}{0.178597} \\
			\dfsalg & \multicolumn{2}{c}{0.178115} \\
			\rhdfs & 0.17447 & 0.230531 \\
			\rblock & 0.17162 & 0.258324 \\
			\bottomrule 
		\end{tabular}	
	\end{table}
	
	\begin{figure}[t]
			\centering
			\includegraphics[scale=0.5]{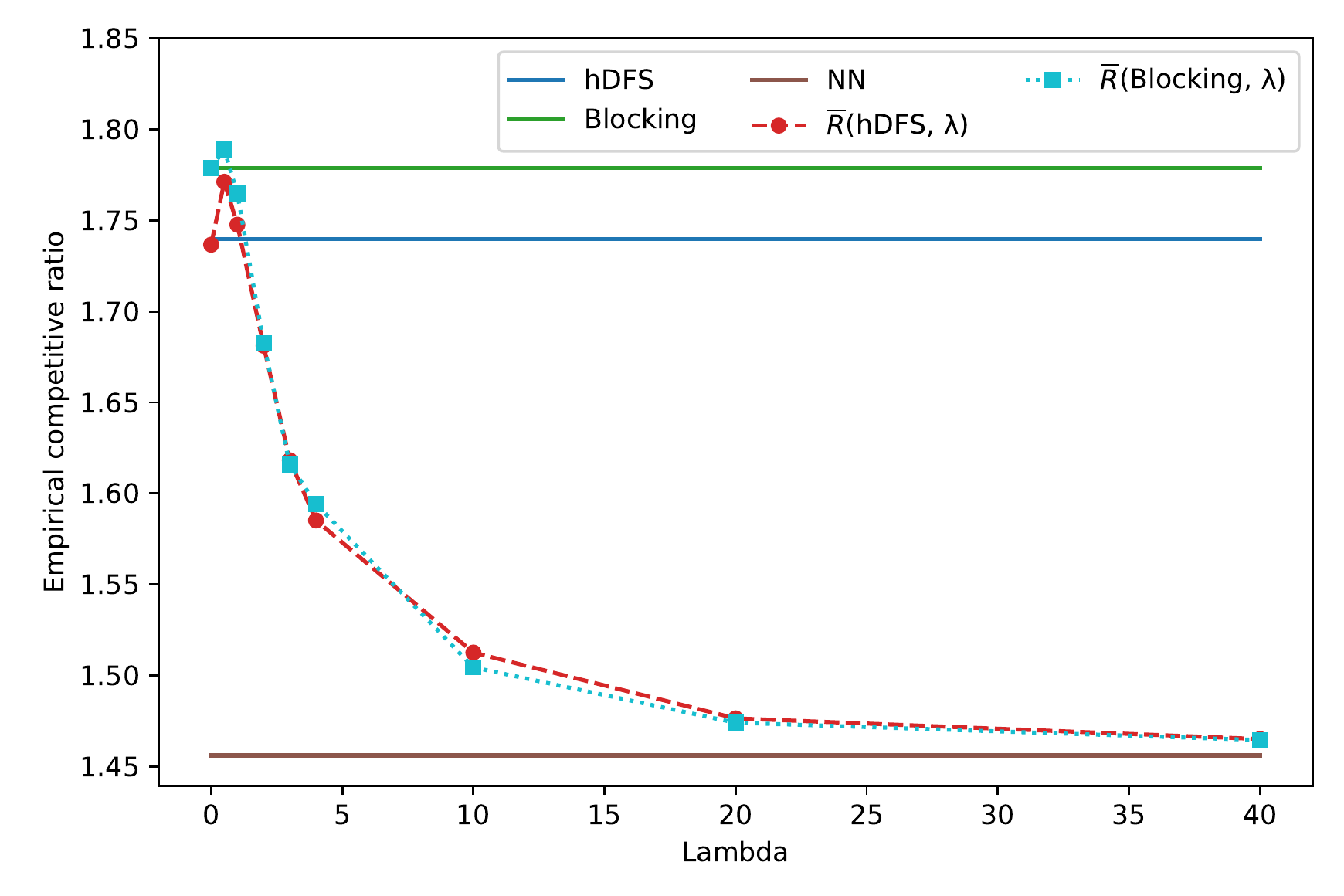}
			\caption{Results for TSPLib instances.}
			\label{fig_tsplib}
	\end{figure}
	
	\paragraph{Rosenkrantz graphs}
	This experiment looks at graphs on which \nn is known to perform badly. 
	Specifically, we consider a family of graphs that are artificially constructed in~\cite{RosenkrantzSL13} in order to show a lower bound of~$\Omega(\log n)$ on the competitive ratio. 
	Each member of the family corresponds to a size parameter~$i$ and consists of $n = \Theta(2^i)$ nodes. 
	The cost of a \nn tour increases linearly with~$i$. 
	Forcing \nn to \lnew{incur a} large exploration cost in these graphs requires a specific tie breaking~\cite{RosenkrantzSL13}. We accomplish this by scaling up all edge costs and then breaking ties by slightly modifing these costs appropriately.
	We refer to this family as Rosenkrantz graphs. 
	There exist variations of the Rosenkrantz construction, that suggest that we can %
	expect similar experimental results,
	even for Euclidean graphs and unit weights~\cite{HurkensW04}.
	Besides \nn, we consider the algorithms \block, \hdfs and \fp (relative error of~$5$) with robustification parameters 0, 1, and 20, respectively.
	Again, \dfsalg acts like \nn on these graphs. 

	The results (\Cref{fig_rosenkrantz_full})
	show that the slight robustification %
	\rfp[1] improves \fp's performance significantly. 
	This remains true for large $i$, even though \nn is performing increasingly bad here. 
	If we increase the amount of robustification, %
	i.e., \rfp[20], the slope is equal to \nn's slope, but %
	it still outperforms \nn and \fp.
	Surprisingly, for \hdfs, this drawback does not appear: \rhdfs[20] does indeed perform worse than for smaller~$\lambda$'s, but its competitive ratio does not grow as~$i$ increases. For \rhdfs[1] there is almost no drop in performance when compared to \hdfs. 
	
	\begin{figure}[tb]
		\begin{subfigure}[t]{0.45\textwidth}
			\includegraphics[width=\textwidth]{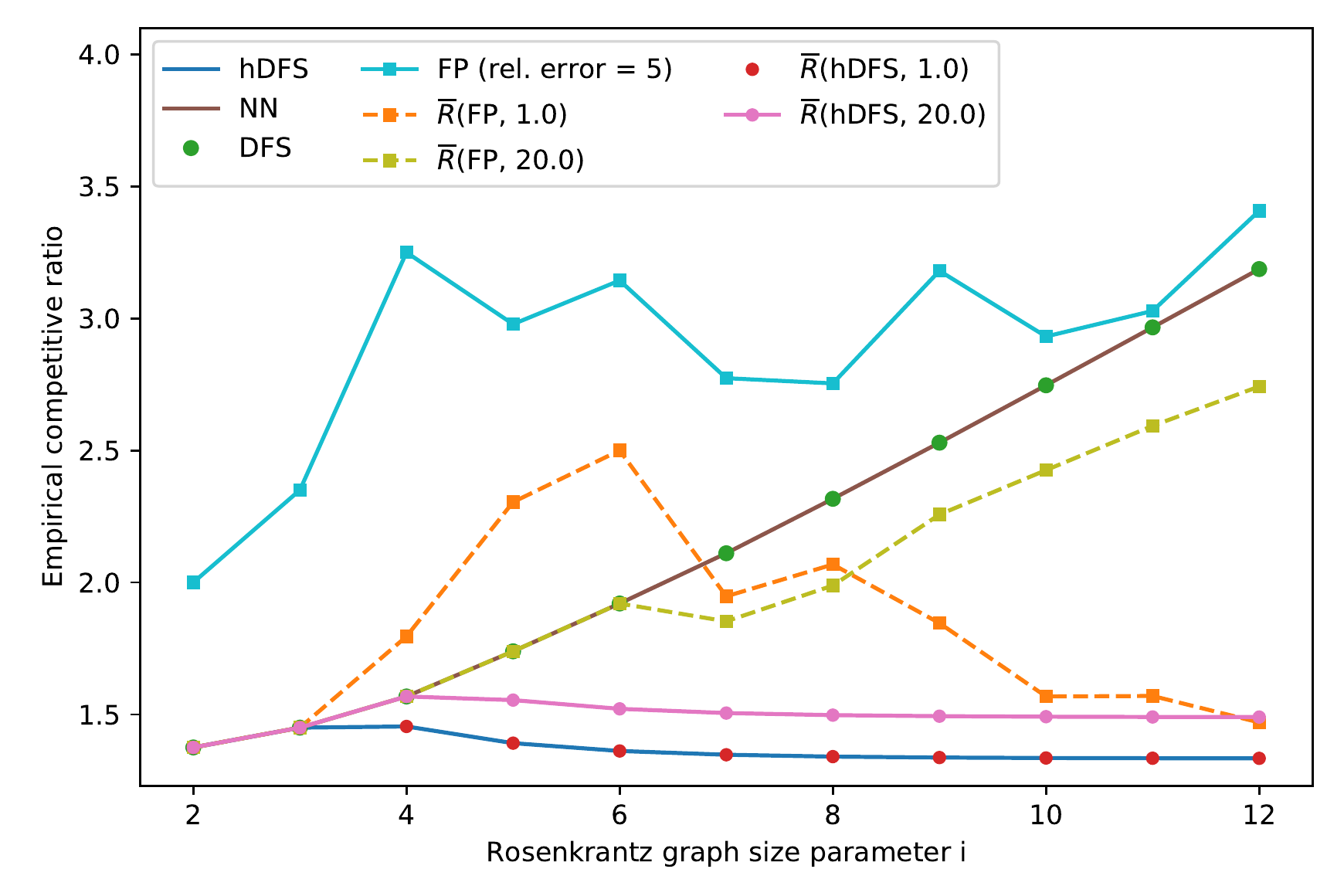}      
			\caption{Results for \hdfs and \fp on Rosenkrantz graphs.}
		\end{subfigure}\hfill
		\begin{subfigure}[t]{0.45\textwidth}
			\includegraphics[width=\textwidth]{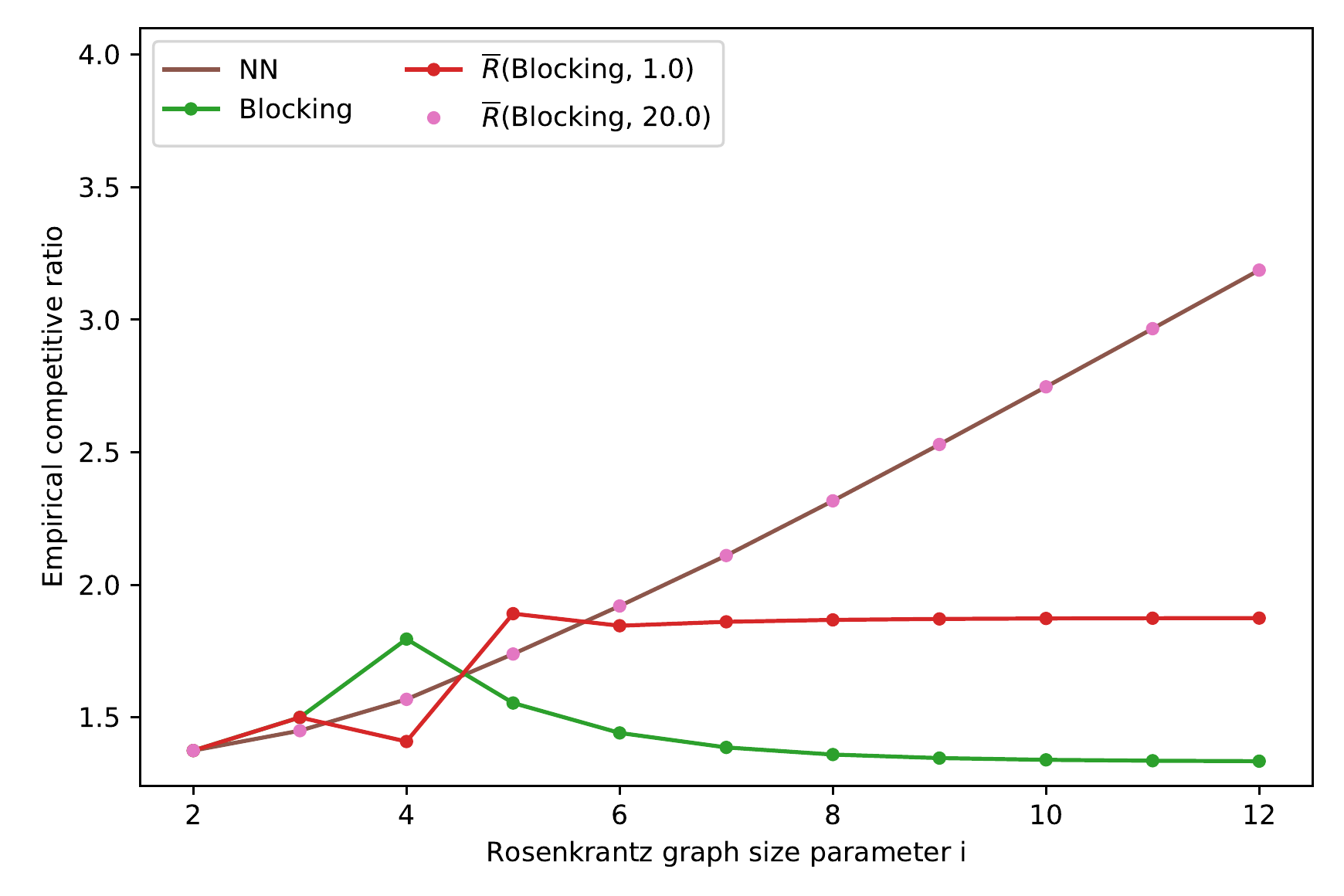}      
			\caption{Results for \block on Rosenkrantz graphs.}
		\end{subfigure}
		\caption{The full set of results for Rosenkrantz graphs.}
		\label{fig_rosenkrantz_full}
	\end{figure}

	In summary, we have indications that even a slight robustification clearly improves algorithms that otherwise perform badly without the performance degrading too much when \nn performs poorly. 
	Even more interestingly, %
	these experiments actually show that the robustification scheme applied to an online algorithm robustifies \nn as well. %
	Since \nn generally performs %
	notably well, c.f. \Cref{fig_tsplib}, this may be %
	useful in practice as protection against unlikely but possible bad scenarios for \nn; in particular, in safety relevant applications where solutions have to satisfy %
	strict performance bounds.
	
	\paragraph{City road networks}
	Finally, we provide experiments to evaluate our learning-augmented algorithm in the context of the real-world task of exploring a city road network. 
	To this end, we consider the ten largest (by population) capitals in Europe with a population less than one million.
	\begin{figure}[b]
		\begin{subfigure}[t]{0.45\textwidth}
			\includegraphics[width=\textwidth]{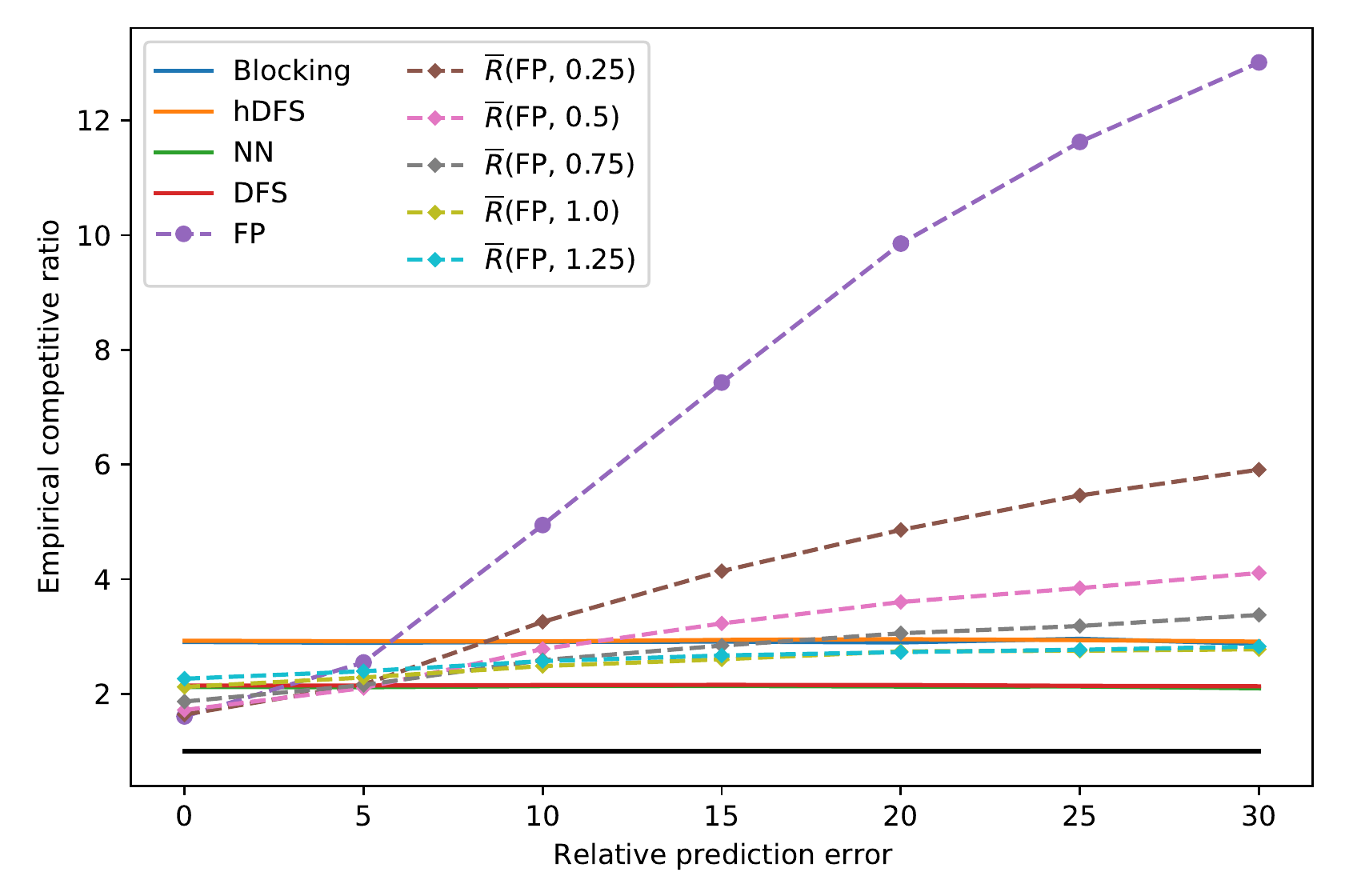}
			\caption{Whole picture: results for large relative errors.}
			\label{fig_cities}
		\end{subfigure}\hfill
		\begin{subfigure}[t]{0.45\textwidth}
			\includegraphics[width=\textwidth]{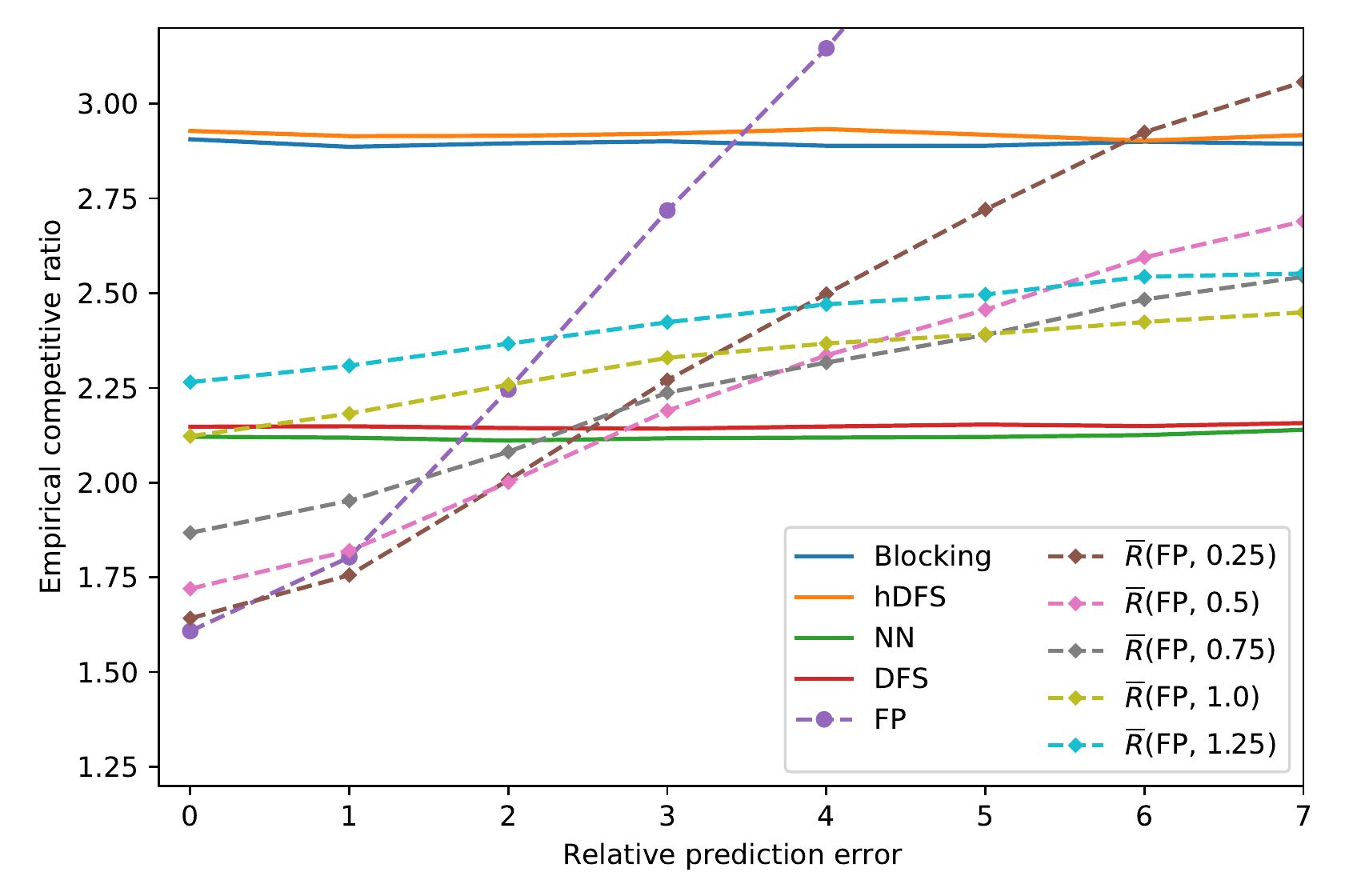}
			\caption{Zoomed-in: results for small relative errors.}
			\label{fig_cities_close}
		\end{subfigure}
		\caption{Average performance of classical and learning-augmented algorithms on city networks.}
	\end{figure}

	Our instances represent the road networks of these cities, built with OSMnx~\cite{boeing17osmnx} from OpenStreetMap data~\cite{osm}.\footnote{We downloaded the city graphs on 11.05.2021.}
	We used the name of the city and the network type \texttt{drive} as input. 
	The sizes of the resulting city graphs are displayed in~\Cref{tab_cities}.
	The observed standard deviations in this experiment are given in~\Cref{tab_stddev_cities}.
	The generated graphs are connected.

	For each instance, we generate 150 predictions with relative errors ranging from 0 up to 30. 
	The average results %
	(\Cref{fig_cities,fig_cities_close}) %
	indicate that, for relative %
	errors less than $2.5$, we improve upon the best performing classical %
	algorithms \nn and \dfsalg by using \rfp with $\lambda < 1$, %
	while the increase %
	of \fp for large errors is significantly smaller after robustification. 
	Moreover, \rfp[0.5] and \rfp[0.75] perform only slightly worse than \hdfs and \block for large relative %
	errors.
	
	\begin{table}[tb]
		\caption{Graph sizes of the considered city road networks.}
		\label{tab_cities}
		\centering
		\begin{tabular}{lcc}
			\toprule
			City &	\# nodes & \# edges \\
			\midrule
			Stockholm & 13029 & 17836 \\
			Amsterdam & 11652 & 17161 \\
			Zagreb & 11873 & 15214 \\
			Oslo & 8225 & 11441 \\
			Chisinau & 3097 & 4348 \\
			Athens & 4880 & 6518 \\
			Helsinki & 9607 & 13259 \\
			Copenhagen & 7002 & 10224 \\
			Riga & 8535 & 11847 \\
			Vilnius & 6961 & 9226 \\
			\bottomrule
		\end{tabular}	
	\end{table}

	We conclude that, given predictions of somewhat reasonable quality, it is possible to beat the best known online algorithms in terms of solution quality, while still providing the security that, even if some predictions turn out to be bad, the consequences are not too harsh.
	While ``somewhat reasonable'' appears to be a relative error of roughly $2.5$, recall that our perfect predictions are only approximate tours, which may be a constant factor away from the optimal tour. 
	With logistic companies in mind, where margins are \nnew{tight} and every potential for optimization needs to be taken advantage of (while still making sure that trucks do arrive eventually), this \nnew{seems} to be a potentially useful outcome.
	
	\begin{table*}[bt]
		\caption{Observed standard deviations in city road network experiments.}
		\label{tab_stddev_cities}
		\centering
		\begin{tabular}{lcccc}
			\toprule
			\multirow{3}{*}{Algorithm} & \multicolumn{4}{c}{Standard deviations over relative error ranges of size} \\ \cmidrule(lr){2-5}
			& \multicolumn{2}{c}{5~(\Cref{fig_cities})} & \multicolumn{2}{c}{1~(\Cref{fig_cities_close})} \\ \cmidrule(lr){2-3} \cmidrule(lr){4-5}
			& min & max & min & max \\
			\midrule	
			\block & 0.136003 & 0.233786 & 0.123958 & 0.252672 \\
			\hdfs & 0.080334 & 0.108376 & 0.070152 & 0.11898 \\
			\nn & 0.05486 & 0.067854 & 0.050882 & 0.077972 \\
			\dfsalg & 0.048957 & 0.057941 &  0.041946 & 0.062405 \\
			\fp & 0.046025 & 2.019176 & 0.046025 & 2.58974 \\
			\rfp[0.25] & 0.046669 & 0.822972 & 0.046669 &  0.879679 \\
			\rfp[0.5] & 0.042480 & 0.463836 & 0.042480 &  0.533792 \\
			\rfp[0.75] & 0.049568 & 0.397593 & 0.049568 & 0.443789 \\
			\rfp[1.0] & 0.071852 & 0.235454 & 0.065083 & 0.260136 \\
			\rfp[1.25] & 0.078526 & 0.127131 & 0.057414 & 0.140951 \\
			\bottomrule 
		\end{tabular}	
	\end{table*}

	\section{Conclusion}
	
	We initiate the study of learning-augmented algorithms for the classical online graph exploration problem. By carefully interpolating between the algorithm that blindly follows any given prediction and Nearest Neighbor, we are able to give a learning-augmented online algorithm whose theoretical worst-case bound linearly depends on the prediction error. %
	In particular, if the prediction is close to perfect, this substantially improves upon %
	any known online algorithm without sacrificing the worst-case bound.
	We complement these theoretical results by computational experiments on various instances, ranging from symmetric graphs of the TSPLib library and Rosenkrantz graphs to city road networks. 
	Moreover, we design a framework to robustify any given online algorithm by carefully interpolating between this algorithm and Nearest Neighbor. %
	This is potentially very interesting also in the area of stochastic optimization or when designing data-driven algorithms, that typically provide probabilistic guarantees but may perform very poorly in the worst case. It remains open whether online graph exploration (without additional information) allows for any constant-competitive~algorithm.

	\small
	\bibliographystyle{abbrv}
	\bibliography{explore.bib}
	
\end{document}